\newtheorem{theorem}{Theorem}
\newtheorem{lemma}{Lemma}
\newtheorem{assumption}{Assumption}
\newtheorem{remark}{Remark}
\newtheorem{problem}{Problem}
\definecolor{LB}{RGB}{65,105,225}
\definecolor{LO}{RGB}{210,127,0}
\definecolor{LP}{cmyk}{0, 0.7808, 0.4429, 0.1412}
\definecolor{LG}{rgb}{0.13, 0.55, 0.13}
\definecolor{LR}{rgb}{0.77, 0.01, 0.2}
\begin{document}


\title{Safe Circumnavigation of a Hostile Target Using Range-Based Measurements}

\author{Gaurav Singh Bhati, Arukonda Vaishnavi, and Anoop Jain
\thanks{The authors are with the Department of Electrical Engineering, Indian Institute of Technology Jodhpur, 342030, India (e-mail: {m22ee003@iitj.ac.in}; {vaishnavi.2@iitj.ac.in}; {anoopj@iitj.ac.in}).}}

\maketitle
	

\begin{abstract}
Robotic systems are frequently deployed in missions that are dull, dirty, and dangerous, where ensuring their safety is of paramount importance when designing stabilizing controllers to achieve their desired goals. This paper addresses the problem of safe circumnavigation around a hostile target by a nonholonomic robot, with the objective of maintaining a desired safe distance from the target. Our solution approach involves incorporating an auxiliary circle into the problem formulation, which assists in navigating the robot around the target using available range-based measurements. By leveraging the concept of a barrier Lyapunov function, we propose a novel control law that ensures stable circumnavigation around the target while preventing the robot from entering the safety circle. This controller is designed based on a parameter that depends on the radii of three circles, namely the \emph{stabilizing circle}, the \emph{auxiliary circle}, and the \emph{safety circle}. By identifying an appropriate range for this design parameter, we rigorously prove the stability of the desired equilibrium of the closed-loop system. Additionally, we provide an analysis of the robot's motion within the auxiliary circle, which is influenced by a gain parameter in the proposed controller. Simulation and experimental results are presented to illustrate the key theoretical developments.
\end{abstract}

\begin{IEEEkeywords}
Safe circumnavigation, Hostile target, Barrier Lyapunov function, Range-based controller, Nonholonomic robot. 
\end{IEEEkeywords}


\section{Introduction}\label{section_1}
With the rapid advancements in defense technologies, modern targets have become increasingly sophisticated and may pose severe threats to their adversaries. While autonomous vehicles such as unmanned ground vehicles (UGVs) and unmanned aerial vehicles (UAVs) have demonstrated remarkable flexibility in navigating challenging and hazardous routes \cite{qin2019autonomous, petravcek2021large}, a significant challenge that continues to test is their ability to safely traverse areas with adversarial or hazardous targets without compromising their survival \cite{deghat2014localization, cao2021safe}. For instance, when tracking an explosive target, a UAV must maintain a safe distance to avoid triggering an explosion \cite{zengin2007real}. Similarly, when navigating around dangerous chemical spills or radioactive zones, a UGV must keep a safe distance to prevent contamination \cite{brinon2015distributed,demetriou2020navigating}. Moreover, signals from global positioning system (GPS) are often unreliable or unavailable in such harsh operating areas, requiring robotic systems to rely on local measurements for navigation \cite{cao2015uav, qin2019autonomous}. This paper contributes to addressing these challenges by proposing a controller that uses only range-based measurements$-$specifically, range and range-rate information$-$and ensures the robot's safety from the hostile target at all times.

Target circumnavigation \cite{cao2015uav,matveev2017tight,wang2024target} or standoff tracking \cite{oh2015coordinated} refers to the problem of steering an autonomous robot around a target at a prescribed distance. A large body of literature exists in this direction, focusing on different sensing capabilities of the robot, such as position or distance-based measurements \cite{zheng2015distributed,shi2021distributed,shames2011circumnavigation}, bearing-only measurements \cite{deghat2014localization,cao2021safe,zheng2015distributed,zou2020distributed}, and range-based measurements \cite{cao2015uav,wang2024target,cao2013circumnavigation,wang2021mobile}. In practice, implementing position-based approaches often relies on GPS signals that may be limited in harsh environments. On the other hand, bearing-based and range-based approaches have gained special attention due to their simplicity in practical implementation. Bearing-based methods can be implemented using monocular camera systems \cite{zheng2015enclosing}, while range-based methods can use stereo vision sensors or LiDAR to measure the range to the hostile target \cite{hrabar2012evaluation}, with range-rate information approximated from the range data \cite{wang2021mobile}. It is important to note that circumnavigation is essentially equivalent to the circular path tracking problem, given the limited sensing capabilities of the robot \cite{brinon2019circular,jain2019trajectory}.

However, the majority of existing works, including those previously mentioned, address target circumnavigation problems without imposing constraints on the robot's motion near the target. While a bearing-based safe circumnavigation approach is discussed in \cite{li2019safe}, it is limited to a single integrator agent model. To the best of the authors' knowledge, the problem of safe circumnavigation using range-based measurements for (nonholonomic) robotic systems remains unexplored in the literature. To address this problem, we leverage the concept of barrier Lyapunov function (BLF), which extends the idea of traditional control Lyapunov functions to systems with constraints. A BLF approaches infinity as its argument nears the boundary of the constraining set, thereby shaping the stabilizing controller to prevent the system from violating the desired operating constraints. Various BLF formulations have been proposed in different contexts: logarithmic BLFs for handling symmetrical and asymmetrical output constraints \cite{tee2009barrier}, universal barrier functions to limit the value of the Lyapunov function at boundaries \cite{jin2018adaptive}, re-centered BLFs for collision avoidance and connectivity preservation in multi-agent systems \cite{panagou2015distributed}, and parameterized BLFs for managing uncertainties in communication and measurements \cite{han2019robust}. An avoidance shell technique is employed in \cite{ballaben2024lyapunov} to address simultaneous avoidance and stabilization for control-affine nonlinear systems. In this paper, we adopt the logarithmic BLF introduced in \cite{tee2009barrier} due their simplicity in handling constraints.

\emph{Main Features and Contributions:} The proposed approach is based on the consideration of an auxiliary circle $\mathcal{C}_a$ in the problem formulation, which guides the robot to stabilize around the desired circular trajectory using only range and range-rate measurements. This concept is inspired by the seminal work \cite{cao2015uav}, which does not address the issue of safety from a hostile target. Our control design consists of two main steps: (i) Firstly, we introduce a bounded scalar function, $\eta$, which attains minimum value at the desired equilibrium, (ii) Based on $\eta$, we then propose a stabilizing control law using the BLF that ensures $\eta$ remains bounded by a design parameter, $\delta$, at all times. Under standard assumptions on the radii of the three circles, we propose an appropriate choice of $\delta$, with which, the proposed controller guarantees that the robot avoids the safety circle around the hostile target and circumnavigates it on the desired circular path. We further characterize the robot's motion inside the auxiliary circle concerning the gain parameter $\kappa$ appearing in the control law. We obtain sufficient conditions on $\kappa$ such that the robot enters the $\mathcal{C}_a$ circle at most once and multiple times. In either case, the robot always avoids the safety circle. Finally, the proposed control strategy was validated experimentally using the Khepera IV differential drive ground robot in a motion capture (MoCap) system comprising overhead cameras. Unlike \cite{wang2024target}, which uses ultra-wideband (UWB) wireless sensors mounted on both the robot and the target for range measurements, our approach does not assume the placement of a sensor on the (hostile) target, which may not be feasible in practice.  

\emph{Preliminaries:} We denote by $\mathbb{R}$ the set of real numbers and by $\mathbb{R}_+$ the set of non-negative real numbers. A differentiable map $\mathfrak{f}: \mathcal{D} \to\mathbb{R}$, $\mathcal{D} \subseteq \mathbb{R}^N$, has a gradient $\nabla \mathfrak{f} = \left[{\partial \mathfrak{f}}/{\partial x_1}, \cdots, {\partial \mathfrak{f}}/{\partial x_n}\right]^\top$ for $x \in \mathbb{R}^n$. For any $\nu > 0$, the following inequalities hold:
\begin{align}
	\label{shafer_inequality} \hspace*{-0.2cm}  &\text{Shafer's inequality \cite{shafer1966elementary}}: \frac{3}{1 + 2\sqrt{1 + \nu^2}} < \frac{\tan^{-1}(\nu)}{\nu} < 1,\\
	\label{logarithmic_inequality} \hspace*{-0.2cm}  &\text{Logarithmic inequality \cite{love198064}}: 1 - \frac{1}{\nu} \leq \ln (\nu) \leq \nu - 1.
\end{align}
Next, we state the following convergence lemma involving BLF that will be helpful in the subsequent analysis.
\begin{lemma}[\hspace{-.1pt}Convergence under BLF {\cite[pp. 919$-$920]{tee2009barrier}}]\label{lem_blf}
	For any positive constant $\rho$, let $\mathcal{Z} \triangleq \{z \in \mathbb{R} \mid -\rho < z < \rho\} \subset \mathbb{R}$ and $\mathcal{N} \triangleq \mathbb{R}^\ell  \times \mathcal{Z} \subset \mathbb{R}^{\ell+1}$ be open sets. Consider the system ${\dot{\zeta} = {h}(t, \zeta)}$,  where, $\zeta \triangleq [w, z]^\top \in \mathcal{N}$, and $h : \mathbb{R}_+ \times \mathcal{N} \to \mathbb{R}^{\ell+1}$ is piecewise continuous in $t$ and locally Lipschitz in $\zeta$, uniformly in $t$, on $\mathbb{R}_+ \times \mathcal{N}$. Suppose that there exist functions $\mathcal{U}: \mathbb{R}^\ell \rightarrow \mathbb{R}_+$ and $\mathcal{V}_1: \mathcal{Z} \rightarrow \mathbb{R}_+$, continuously differentiable and positive definite in their respective domains, such that $\mathcal{V}_1(z) \to \infty$ as $|z| \to \rho$ and $\alpha_1(\|w\|) \leq \mathcal{U}(w) \leq \alpha_2(\|w\|)$, where, $\alpha_1$ and $\alpha_2$ are class $\mathcal{K}_\infty$ functions. Let $\mathcal{V}(\zeta) \triangleq \mathcal{V}_1(z) + \mathcal{U}(w)$, and $z(0) \in \mathcal{Z}$. If it holds that $\dot{\mathcal{V}} = (\nabla \mathcal{V})^\top{h} \leq 0$, in the set $z \in \mathcal{Z}$, then $z(t) \in \mathcal{Z},~\forall t \in [0, \infty)$.
\end{lemma}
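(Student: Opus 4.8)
The plan is to establish forward invariance of the interval $\mathcal{Z}$ in the $z$-coordinate by pairing a standard Lyapunov monotonicity argument with the defining barrier property $\mathcal{V}_1(z)\to\infty$ as $|z|\to\rho$. Since $h$ is locally Lipschitz in $\zeta$ and piecewise continuous in $t$, the Picard--Lindel\"of theorem supplies a unique solution $\zeta(t)=[w(t),z(t)]^\top$ on a maximal interval $[0,T_{\max})$ with $\zeta(t)\in\mathcal{N}$. Because $z(0)\in\mathcal{Z}$ lies strictly in the interior, $\mathcal{V}(\zeta(0))=\mathcal{V}_1(z(0))+\mathcal{U}(w(0))=:V_0$ is finite, which is the quantity every subsequent bound will be measured against.

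First I would invoke the hypothesis $\dot{\mathcal{V}}=(\nabla\mathcal{V})^\top h\leq 0$ on $\{z\in\mathcal{Z}\}$ to conclude that $\mathcal{V}$ is non-increasing along the trajectory, so $\mathcal{V}(\zeta(t))\leq V_0$ for all $t\in[0,T_{\max})$. As $\mathcal{V}_1$ and $\mathcal{U}$ are each non-negative, this single energy bound splits into two: $\mathcal{V}_1(z(t))\leq V_0$ and $\mathcal{U}(w(t))\leq V_0$. The latter, together with $\alpha_1(\|w\|)\leq\mathcal{U}(w)$ and the invertibility of $\alpha_1\in\mathcal{K}_\infty$, yields the a priori bound $\|w(t)\|\leq\alpha_1^{-1}(V_0)=:R$, confining the $w$-component to a closed ball.

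The crux is to convert $\mathcal{V}_1(z(t))\leq V_0$ into confinement of $z$ away from $\pm\rho$. The barrier property is exactly what makes the sublevel set $\Omega\triangleq\{z\in\mathcal{Z}\mid \mathcal{V}_1(z)\leq V_0\}$ a \emph{compact} subset of the open interval $\mathcal{Z}$: continuity of $\mathcal{V}_1$ makes $\Omega$ closed, while $\mathcal{V}_1(z)\to\infty$ as $|z|\to\rho$ forces $\Omega$ to stay bounded away from the endpoints, i.e.\ there exists $\rho'<\rho$ with $\Omega\subseteq[-\rho',\rho']$. Hence $|z(t)|\leq\rho'<\rho$ throughout $[0,T_{\max})$, so the trajectory never reaches the boundary.

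Finally, I would upgrade this to global existence. The two bounds show that $\zeta(t)$ remains in the compact set $K\triangleq\{w\in\mathbb{R}^\ell \mid \|w\|\leq R\}\times[-\rho',\rho']$, which is contained in the open set $\mathcal{N}$. By the standard maximal-interval criterion, a solution on a finite maximal interval must eventually leave every compact subset of its domain; since the trajectory never leaves $K$, we must have $T_{\max}=\infty$, and therefore $z(t)\in\mathcal{Z}$ for all $t\in[0,\infty)$, which is the claim. The only genuinely delicate step is the third one: recognizing that the barrier growth of $\mathcal{V}_1$ is precisely the ingredient that turns the finite energy bound into a compact, interior-contained sublevel set; everything else is routine Lyapunov and ODE-continuation bookkeeping.
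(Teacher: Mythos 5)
Your proof is correct and takes essentially the same route as the argument behind this lemma in the cited source \cite{tee2009barrier} (the paper itself states the lemma by citation rather than reproving it): maximal-solution existence, monotonicity of $\mathcal{V}$ giving $\mathcal{V}_1(z(t)) \leq \mathcal{V}(\zeta(0))$, the barrier growth of $\mathcal{V}_1$ confining $z$ to a compact subset of $\mathcal{Z}$, the class-$\mathcal{K}_\infty$ lower bound confining $w$, and the compactness/continuation criterion yielding $T_{\max} = \infty$. No gaps.
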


\begin{figure}[t!]
	\centering{
		\subfigure[The problem]{\hspace*{-0.5cm}\includegraphics[width=4.5cm]{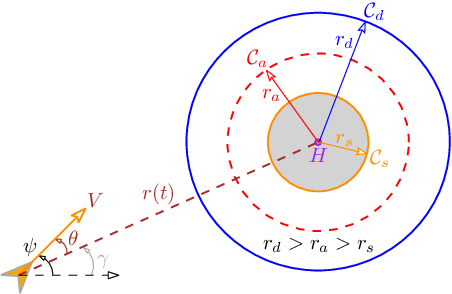}\label{fig_problem_formulation}}\hspace*{-0.1cm}
		\subfigure[Solution methodology ]{\includegraphics[width=4.0cm]{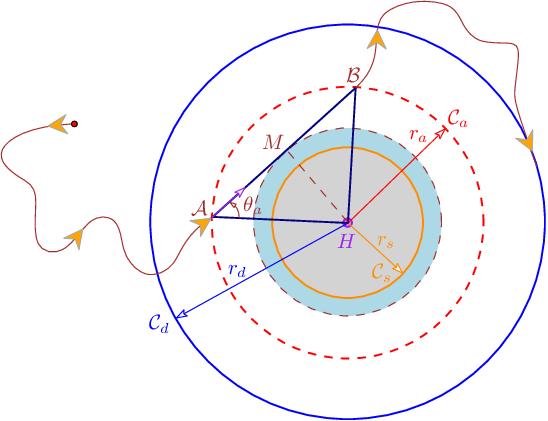}\label{fig_solution_approach}}
		\caption{Illustration of the problem and solution methodology.}
		\label{fig_problem_solution_approach}}
	\vspace*{-15pt}
\end{figure}

\section{System Description and Problem Formulation}\label{section_2}

\subsection{System Description}
We consider the nonholonomic unicycle-type robot model, moving in the $\mathbb{R}^2$ space and described using the kinematics:
\begin{equation}\label{robot_model_cartesion}
	\dot{x}(t) = V\cos\psi(t), \quad \dot{y}(t) = V\sin\psi(t), \quad \dot{\psi}(t) = \Omega,
\end{equation}
where $[x(t), y(t)]^\top \in \mathbb{R}^2$ is its position, $\psi(t) \in \mathbb[0, 2\pi)$ is the heading angle measured in the anticlockwise direction from the positive $X-$axis, $V > 0$ is the constant (forward) linear speed and $\Omega \in \mathbb{R}$ is the turn-rate controller that needs to be designed. If $\dot{\psi} \equiv \Omega_0$ is a constant, the robot moves around a circular path of radius $|V/\Omega_0|$, and if $\dot{\psi} \equiv 0$, the robot moves in a straight line. The sense of the rotation of the robot is governed by the sign of $\Omega$, which essentially controls the curvature of its trajectory \cite{jain2019trajectory}. We follow the convention that the robot moves in the anticlockwise direction if $\Omega > 0$ and it moves in the clockwise direction if $\Omega < 0$.

Let there be a hostile target $H$ situated at $[x_H, y_H]^\top \in \mathbb{R}^2$ as shown in Fig.~\ref{fig_problem_formulation}. The straight line joining the robot and target is called the line-of-sight (LoS). From Fig.~\ref{fig_problem_formulation}, it can be obtained that the LoS range $r(t) = \sqrt{(x(t)-x_H)^2 + (y(t)-y_H)^2}$ and the LoS reference angle $\gamma(t) = \tan^{-1}((y_H - y(t))/(x_H - x(t))$. We further introduce the \emph{bearing angle} $\theta(t)$ that is essentially the heading angle of the robot measured in the anticlockwise direction from the LoS line (see Fig.~\ref{fig_problem_formulation}). With these notations, the resultant \emph{robot-target engagement model} can be represented in the polar coordinates as \cite{cao2015uav}:
\begin{equation}\label{robot_model_polar}
	\dot{r}(t) = -V\cos\theta(t), \qquad \dot{\theta}(t)  = \Omega + ({V\sin\theta(t)}/{r(t)}),
\end{equation}
where $\dot{r}(t)$ is the range-rate and $\dot{\theta}(t)$ is the rate of change of bearing angle. Clearly, $\psi(t) = \theta(t) + \gamma(t)$ (see Fig.~\ref{fig_problem_formulation}). The description of various circles in Fig.~\ref{fig_problem_formulation} is given in the next subsection, followed by the problem statement.

\subsection{Problem Formulation}
As shown in Fig.~\ref{fig_problem_formulation}, the control objective is to make the robot circumnavigate the target $H$ on a \emph{desired circle} $\mathcal{C}_d$ centered at $H$ and having the (desired) radius $r_d$. In addition, it is desired that the robot maintains a prescribed safe distance $r_s$ from the hostile target $H$ at all times while stabilizing on $\mathcal{C}_d$. In other words, this requirement imposes constraints on the robot's motion such that it never enters the \emph{safety circle} $\mathcal{C}_s$ centered at $H$ and having radius $r_s$. We further consider an \emph{auxiliary circle} $\mathcal{C}_a$ with radius $r_a$ and center at $H$ (see Fig.~\ref{fig_problem_formulation}). The circle $\mathcal{C}_a$ helps in navigating the robot towards the target $H$ using range-based measurements, as discussed in detail subsequently. We further consider the following mild assumption on the three radii:

\begin{assumption}\label{assumption_radii}
In Fig.~\ref{fig_problem_formulation}, the radii $r_d$, $r_a$ and $r_s$ are such that (i) $r_d > r_a > r_s$,  (ii) $r_d < r_s + r_a$, and (iii) $r^2_a > r_d r_s$. 
\end{assumption} 

Note that the radii in Assumption~\ref{assumption_radii} form the sides of a triangle. Unless explicitly mentioned otherwise, we assume that Assumption~\ref{assumption_radii} holds throughout the paper. We now formally state the problem considered in this paper: 

\begin{problem}\label{problem}
Consider the robot model \eqref{robot_model_polar}. For the given location of the hostile target $H$ and the radii $r_d$, $r_a$ and $r_s$ satisfying Assumption~\ref{assumption_radii}, design the control $\Omega$ such that the robot circumnavigates $H$ on the desired circular path $\mathcal{C}_d$, while avoiding the safety circle $\mathcal{C}_s$ at all times, that is, $r(t) \to r_d$ and $\theta(t) \to {\pi}/{2}$, as $t \to \infty$ and $r(t) \geq r_s$ for all $t \geq 0$. 
\end{problem}

Please note that there also exists an equilibrium when $r(t) \to r_d$ and $\theta(t) \to {3\pi}/{2}$ for the robot to have a stable motion on $\mathcal{C}_d$; only the direction of rotation will be reversed. However, this case is excluded from our analysis in this paper for simplicity and clarity and can be treated similarly with some minor modifications. In the next section, we present our solution approach to Problem~\ref{problem} and propose the control $\Omega$.  

\section{Solution Approach and Control Design}\label{section_3}
 
\subsection{Solution Approach}
To solve the problem, we propose the controller of the form:
\begin{equation}\label{control_new}
	\Omega =
	\begin{cases}
		f(r, \dot{r}), & r(t) \geq r_a,\\
		0, & \text{otherwise},
	\end{cases}
\end{equation} 
where the function $f(r, \dot{r})$ needs to be designed concerning Problem~\ref{problem}. From \eqref{control_new}, it is clear that the $\Omega$ is governed by the function $f(r, \dot{r})$ if $r(t) \geq r_a$ and $\Omega = 0$ if $r(t) < r_a$, i.e., when the robot is inside the $\mathcal{C}_a$ circle, where it moves in a straight line. This situation is depicted in Fig.~\ref{fig_solution_approach} where the robot enters $\mathcal{C}_a$ at point $\mathcal{A}$ and leaves $\mathcal{C}_a$ at point $\mathcal{B}$. In this situation (i.e., when $r < r_a$), the minimum distance between the target $H$ and the robot is given by $r_{\min} = HM$ where $M$ is the midpoint of the straight line joining points the $\mathcal{A}$ and $\mathcal{B}$ such that $HM \perp \mathcal{A}\mathcal{B}$. From Fig.~\ref{fig_solution_approach}, one can easily obtain that  
\begin{equation}\label{r_min}
r_{\min} = HM = r_a\sin\theta_a(t),
\end{equation}
where $\theta_a(t)$ is the bearing angle at which the robot hits the $\mathcal{C}_a$ circle at point $\mathcal{A}$ at some instant of time $t$. Obviously, if $r_{\min} \geq r_s$, our objective of avoiding the safety circle $\mathcal{C}_s$ will be achieved. This is equivalent to the fact that the bearing angle $\theta_a(t)$ when the robot enters the $\mathcal{C}_a$ circle must satisfy the condition
\begin{equation}\label{condition_angle}
\sin\theta_a(t) \geq {r_s}/{r_a}, 
\end{equation}
for the given radii $r_a$ and $r_s$, using \eqref{r_min}. In other words, our approach relies on designing the function $f(r, \dot{r})$ in \eqref{control_new} such that the condition \eqref{condition_angle} must be satisfied for all $t \geq 0$ to prevent the robot penetrating the safety circle $\mathcal{C}_s$. Without loss of generality, we assume that the robot begins its motion when $r(t) \geq r_a$. Since if the robot begins its motion within the $\mathcal{C}_a$ circle where $r(t) < r_a$ and $\sin^{-1}({r_s}/{r(t)}) < \theta(t) < 2\pi - \sin^{-1}({r_s}/{r(t)})$, it eventually moves outside $\mathcal{C}_a$ in the finite time due to zero control input imposed inside $\mathcal{C}_a$. This finite time can be regarded as a new starting point when $r(t) \geq r_a$. Please note that the trivial case of initial bearing $\theta(t_o) \in (0, \sin^{-1}({r_s}/{r(t_o)})) \cup (2\pi - \sin^{-1}({r_s}/{r(t_o)}), 2\pi)$, where the robot begins its motion within $\mathcal{C}_a$ and directly penetrates the safety circle $\mathcal{C}_s$, is excluded from our analysis. Another important fact we shall show subsequently is that $\theta(t) \in (0, \pi)$ for all $t \geq 0$, under the proposed control scheme, which is discussed next.

\subsection{Control Design}
The proposed control design methodology involves two steps. At first, we introduce an intermediate positive scalar function characterizing the desired equilibrium of the system \eqref{robot_model_polar}. Leveraging this, we propose a candidate logarithmic BLF that solves the problem with desired safety requirements.      

\subsubsection{Equilibrium characterizing function}
For $r(t) \geq r_a$, we introduce the following scalar function:
\begin{equation}\label{eta}
	\eta(r, \theta) \triangleq 1 - \sin\theta + \phi(r), 
\end{equation}
\begin{equation}\label{phi}
\text{where}, ~ \phi(r) = \int_{r_d}^{r} [k\cos(\sin^{-1}({r_a}/{\sigma})) - ({1}/{\sigma})] d\sigma,
\end{equation}
and the control gain $k > 0$ is given by
\begin{equation}\label{gain_k}
	k = {1}/{\sqrt{r_d^2 - r_a^2}}.
\end{equation}
A discussion on how $k$ is given by \eqref{gain_k} is provided in Lemma~\ref{lem_gain_k} in the next subsection. Below, we show that \eqref{eta} is always positive and achieves its minimum value at the equilibrium $[r_d, {\pi}/{2}]^\top$, as stated in Problem~\ref{problem}. To illustrate this, we first state the following result:

\begin{lemma}\label{lem_phi}
For the function $\phi(r)$ in \eqref{phi}, it holds that $\phi(r) \geq 0$ for all $r \geq r_a$ and $\phi(r) = 0$ if $r = r_d$.
\end{lemma}

\begin{proof}
Denote $\phi(r) = \int_{r_d}^{r} g(\sigma) d\sigma$ where $g(\sigma) \triangleq (k\cos(\sin^{-1}({r_a}/{\sigma})) - ({1}/{\sigma}))$. Note that $\cos(\sin^{-1}({r_a}/{r})) = {\sqrt{r^2 - r^2_a}}/{r}$, as $0 \leq \sin^{-1}({r_a}/{r}) \leq {\pi}/{2}$ for $r \geq r_a$, one can obtain $k\cos(\sin^{-1}({r_a}/{r_d})) = {1}/{r_d}$, where we used \eqref{gain_k} for simplification. Consequently, $g(\sigma)$ can also be expressed as $g(\sigma) = [({1}/{r_d}) - ({1}/{\sigma})] + k[\cos(\sin^{-1}({r_a}/{\sigma})) - \cos(\sin^{-1}({r_a}/{r_d}))]$. Clearly, $g(r_d) = 0$. If $\sigma > r_d$, it follows that $({1}/{r_d}) - ({1}/{\sigma}) > 0$ and $\cos(\sin^{-1}({r_a}/{\sigma})) - \cos(\sin^{-1}({r_a}/{r_d})) > 0$, as $\cos(\sin^{-1}(\bullet))$ is a decreasing function in its argument $\bullet > 0$. Consequently, $\phi(r) = 0$ if $r = r_d$ and $\phi(r) > 0$ if $r > r_d$. On the other hand, if $r_a \leq \sigma < r_d$,  $({1}/{r_d}) - ({1}/{\sigma}) < 0$ and $\cos(\sin^{-1}({r_a}/{\sigma})) - \cos(\sin^{-1}({r_a}/{r_d})) < 0$. In this situation, $\phi(r) = \int_{r_d}^{r} g(\sigma) d\sigma = -\int_{r}^{r_d} g(\sigma) d\sigma > 0$, after reversal of the integration limits because $r_a \leq \sigma < r_d$. Hence, proved.   
\end{proof}
Using Lemma~\ref{lem_phi}, it can be concluded that $\eta(r, \theta) \geq 0$ and $\eta(r, \theta) = 0$ when $r(t) = r_d$ and $\theta(t) = \pi/2$, which is the desired equilibrium. Further, the value of $\eta(r, \theta)$ is the same at the two boundary points $\mathcal{A}$ and $\mathcal{B}$ in Fig.~\ref{fig_solution_approach}. It follows from the fact that (i) at the entry point $\mathcal{A}$, $r(t) = r_a$ and $\theta(t) = \theta_a$, and (ii) at the exit point $\mathcal{B}$, $r(t) = r_a$ and $\theta(t) = \pi - \theta_a$ (see Fig.~\ref{fig_solution_approach}). Now, it is straight forward to check that $\eta(r_a, \theta_a) = \eta(r_a, \pi - \theta_a)$. To facilitate the further analysis, one can evaluate the definite integral in \eqref{phi} to obtain $\phi(r) = k[\sqrt{r^2 - r^2_a} - \sqrt{r^2_d - r^2_a}] - kr_a [\tan^{-1}({\sqrt{r^2 - r^2_a}}/{r_a}) - \tan^{-1}({\sqrt{r^2_d - r^2_a}}/{r_a})] + \ln({r_d}/{r})$.

\subsubsection{BLF and Control law}
Exploiting function $\eta(r, \theta)$, we propose the following candidate logarithm BLF for $r(t) \geq r_a$:
\begin{equation}\label{W}
	W(\eta) = ({1}/{2}) \ln [{\delta^2}/(\delta^2 - \eta^2)],
\end{equation} 
where ``$\ln$" is the natural logarithm and the constant $\delta > 0$ is a design parameter, decided by the radii $r_d$, $r_a$ and $r_s$. By selecting $\delta$ appropriately, it is possible that the angle condition \eqref{condition_angle} is satisfied, thus ensuring the avoidance of the safety circle $\mathcal{C}_s$. A detailed discussion on the selection of $\delta$ is given in the next section. Note that \eqref{W} is positive-definite and $\mathcal{C}^1$ continuous in the domain where $\eta(r, \theta) < \delta$ \cite{tee2009barrier}, and $W(\eta) = 0$ if $\eta = 0$ that corresponds to the desired equilibrium. The time-derivative of \eqref{W} is obtained as $\dot{W} = ({1}/{2})[(\delta^2 - \eta^2)/{\delta^2}][{2\delta^2\eta\dot{\eta}}/(\delta^2 - \eta^2)^2] = {\eta\dot{\eta}}/(\delta^2 - \eta^2)$. Substituting $\dot{\eta}$ using \eqref{eta}, the numerator of $\dot{W}$ can be simplified as $\eta\dot{\eta} = \eta(-\dot{\theta}\cos\theta + \dot{\phi}(r))$, where $\dot{\phi}(r)$ is obtained from \eqref{phi} using Leibniz integral rule as: $\dot{\phi}(r) = \dot{r}[k\cos(\sin^{-1}({r_a}/{r})) - ({1}/{r})] = -V\cos\theta [k\cos(\sin^{-1}({r_a}/{r})) - ({1}/{r})]$, using \eqref{robot_model_polar}. Consequently, we have $\eta\dot{\eta} = \eta(-\dot{\theta}\cos\theta -V\cos\theta [k\cos(\sin^{-1}({r_a}/{r})) - ({1}/{r})]) = -\eta\cos\theta(\Omega + ({V\sin\theta}/{r}) + V [k\cos(\sin^{-1}({r_a}/{r})) - ({1}/{r})])$, using \eqref{robot_model_polar}. Now, for $r(t) \geq r_a$, choosing control
\begin{equation}\label{control}
\Omega = f(r, \dot{r}) = \frac{V}{r}(1 - \sin\theta) - k V \cos\left[\sin^{-1}\left(\frac{r_a}{r}\right)\right]  + \frac{\kappa\cos\theta}{\delta^2 - \eta^2},
\end{equation}
where $\kappa > 0$ is the control gain, yields that $\dot{W} = -\kappa {\eta \cos^2\theta}/{(\delta^2 - \eta^2)^2} \leq 0$, since $\kappa > 0$ and $\eta \geq 0$. Note that the function $f(r, \dot{r})$ in \eqref{control} can be written solely in terms of $r$ and $\dot{r}$, using \eqref{robot_model_polar}, as: $f(r, \dot{r}) = ({V}/{r})[1 - ({\sqrt{V^2 - \dot{r}^2}}/{V})] - k V \cos(\sin^{-1}({r_a}/{r}))  - ({\kappa}/{V})[{\dot{r}}/({\delta^2 - \eta^2})]$, where $\eta = 1 - ({\sqrt{V^2 - \dot{r}^2}}/{V}) + \phi(r)$. Based on the BLF \eqref{W}, we have the following result:
\begin{theorem}\label{thm_main_results}
Consider the robot-target engagement model \eqref{robot_model_polar} under the control law \eqref{control_new} where $f(r, \dot{r})$ is given by \eqref{control} for $r(t) \geq r_a$. For the given positive constant $\delta$, let the initial conditions $[r(0), \theta(0)]^\top$ belong to the set $\Theta \triangleq \{[r(t), \theta(t)]^\top \in [r_a, \infty) \times (0, \pi) \mid \eta(r, \theta) < \delta\}$, where $\eta(r, \theta)$ is defined in \eqref{eta}. Then, the following results hold:
\begin{enumerate}[leftmargin=*]
	\item[(a)] $\eta(r(t), \theta(t)) < \delta$ for all $[r(t), \theta(t)]^\top$ in the set $\Theta$.
	\item[(b)] $\dot{\eta}(r(t), \theta(t)) \leq 0$ for all $[r(t), \theta(t)]^\top$ in the set $\Theta$.
	\item[(c)] There exists a tighter bound on $\eta(r(t), \theta(t))$ given by $\eta(r(t), \theta(t)) \leq \delta\sqrt{1- {\rm e}^{-2W(\eta(0))}}$ in the set $\Theta$, where $\eta(0) \triangleq \eta(r(0), \theta(0))$ is evaluated at $t = 0$.
	\item[(d)] The control $\Omega$ is uniformly bounded by $|\Omega| \leq V (k + ({1}/{r_a})) + ({\kappa {\rm e}^{2W(\eta(0))}}/{\delta^2})$.	
\end{enumerate}
\end{theorem}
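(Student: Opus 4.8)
My plan is to treat the barrier Lyapunov function $W(\eta)$ in \eqref{W} as the central object and to organize the four claims around the single computation $\dot{W} = -\kappa\eta\cos^2\theta/(\delta^2-\eta^2)^2 \le 0$ already established just before the statement. Claim (a) is a forward-invariance statement and is the heart of the argument; claims (b)--(d) then follow by elementary manipulations. Throughout I would restrict attention to the controlled regime $r(t)\ge r_a$, where \eqref{control} is active, and keep in mind that $\eta\ge 0$ by Lemma~\ref{lem_phi} together with $1-\sin\theta\ge 0$.

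For (a), I would invoke Lemma~\ref{lem_blf} with the constrained variable $z=\eta$, the constant $\rho=\delta$, and the barrier term $\mathcal{V}_1(z)=W(z)$. Since $\eta\ge 0$, the trajectory effectively lives in $[0,\delta)\subset\mathcal{Z}=(-\delta,\delta)$, and $W(\eta)\to\infty$ as $\eta\to\delta$, so the barrier hypothesis holds. Because the initial state lies in $\Theta$ we have $\eta(0)<\delta$, hence $W(\eta(0))<\infty$; combined with $\dot{W}\le 0$ this forbids $\eta$ from reaching $\delta$ in finite time, since otherwise $W$ would blow up while remaining below its initial value. Lemma~\ref{lem_blf} then yields $\eta(t)<\delta$ for all $t\ge 0$, i.e. the set $\{\eta<\delta\}$ is forward invariant. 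The two points requiring care are (i) casting the planar $(r,\theta)$ dynamics into the form required by Lemma~\ref{lem_blf}, since $\eta$ by itself is not a state and the map $(r,\theta)\mapsto\eta$ is not injective, and (ii) confirming that $\theta(t)$ remains in $(0,\pi)$ so that the robot stays on the intended side of the line-of-sight; I expect to obtain the latter from $\eta<\delta$, which gives $\sin\theta>1-\delta+\phi(r)$ and thus keeps $\theta$ away from $0$ and $\pi$ for an admissible $\delta$.

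Claims (b) and (c) are then short. For (b) I would not route through $W$ but instead compute $\dot\eta$ directly: using \eqref{robot_model_polar}, $\dot\phi$ via the Leibniz rule, and substituting the control \eqref{control}, the bracketed terms telescope and leave $\dot\eta=-\kappa\cos^2\theta/(\delta^2-\eta^2)$; since (a) guarantees $\delta^2-\eta^2>0$ on $\Theta$ and $\kappa>0$, this is nonpositive. For (c) I would use that $\dot{W}\le 0$ makes $W$ nonincreasing, so $W(\eta(t))\le W(\eta(0))$; unwinding \eqref{W} gives $\delta^2/(\delta^2-\eta^2)\le e^{2W(\eta(0))}$, which rearranges to $\eta^2\le\delta^2(1-e^{-2W(\eta(0))})$ and hence to the stated bound $\eta\le\delta\sqrt{1-e^{-2W(\eta(0))}}$.

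Finally, for (d) I would bound the three summands of \eqref{control} separately through the triangle inequality. On $(0,\pi)$ one has $0\le 1-\sin\theta\le 1$ and $r\ge r_a$, so the first term is at most $V/r_a$; since $\cos(\sin^{-1}(r_a/r))=\sqrt{r^2-r_a^2}/r\in[0,1)$, the second term is bounded in magnitude by $kV$; and the third term is controlled using the bound from (c), namely $\delta^2-\eta^2\ge\delta^2 e^{-2W(\eta(0))}$, together with $|\cos\theta|\le 1$, giving a magnitude at most $\kappa e^{2W(\eta(0))}/\delta^2$. Summing yields exactly $|\Omega|\le V(k+1/r_a)+\kappa e^{2W(\eta(0))}/\delta^2$. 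I expect the only genuine obstacle in the whole proof to be the rigorous application of Lemma~\ref{lem_blf} in step (a)---in particular the well-posedness of the closed loop, as \eqref{control} is defined only where $\eta<\delta$, and the accompanying verification that $\theta$ stays in $(0,\pi)$; everything after (a) is algebraic.
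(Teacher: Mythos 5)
Your proposal is correct and follows essentially the same route as the paper: part (a) by invoking Lemma~\ref{lem_blf} with $z=\eta$, $\rho=\delta$ and $\dot{W}\le 0$; part (b) by direct cancellation yielding $\dot{\eta}=-\kappa\cos^2\theta/(\delta^2-\eta^2)$; part (c) by monotonicity of $W$ and exponentiating \eqref{W}; and part (d) by term-by-term bounds on \eqref{control}. The caveats you flag about step (a)---that $\eta$ is not itself a state and that $\theta\in(0,\pi)$ needs verification---are legitimate points the paper passes over silently (it defers the $\theta$-range question to Theorem~\ref{thm_range_theta}), but they do not change the argument's structure.
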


\begin{proof}
We prove each statement sequentially as follows: 
\begin{enumerate}[leftmargin=*]
\item[(a)] Since $\dot{W} \leq 0$, it now immediately follows from Lemma~\ref{lem_blf} that $\eta(r(t), \theta(t)) < \delta$ in $\Theta$. 
\item[(b)] Substituting \eqref{control} into $\eta\dot{\eta}$, $\dot{\eta}$ is obtained as
\begin{equation}\label{eta_dot}
	\dot{\eta} = -\kappa {\cos^2\theta}/({\delta^2 - \eta^2}) \leq 0,
\end{equation}
in $\Theta$, since $\eta(r(t), \delta(t)) < \delta$ in $\Theta$, as proved in part (a). 
\item[(c)] Since $\dot{W} \leq 0$, it holds that $W(\eta(r(t), \theta(t))) \leq W(\eta(0))$ in $\Theta$, where we denote by $W(0) = W(\eta(0))$. Substituting $W(\eta)$ from \eqref{W}, we have that $({1}/{2}) \ln ({\delta^2}/({\delta^2 - \eta^2})) \leq W(\eta(0))$ in $\Theta$. Taking exponential on both the sides, we have ${\delta^2}/({\delta^2 - \eta^2}) \leq {\rm e}^{2W(\eta(0))}$ in $\Theta$. Since $\delta^2 - \eta^2 > 0$, according to part (a) above, it follows that $\delta^2 - \eta^2 \geq \delta^2{\rm e}^{-2W(\eta(0))}$, which, after rearranging terms, leads to $\eta(r(t), \theta(t)) \leq \delta\sqrt{1- {\rm e}^{-2W(\eta(0))}}$ in $\Theta$.
\item[(d)] In \eqref{control}, noting that $|({1}/{r})(1 - \sin\theta)| \leq {1}/{r_a}$ since $r(t) \geq r_a$, $|-\cos(\sin^{-1}({r_a}/{r}))|$ is bounded by 1, and $|{\cos\theta}/({\delta^2 - \eta^2})| \leq {{\rm e}^{2W(\eta(0))}}/{\delta^2}$, using the above part (c). Now, the required result immediately follows, since $\Omega = 0$ for $r(t) < r_a$.  
\end{enumerate}
This completes the proof. 
\end{proof}
 
We are now ready to answer why the control gain $k > 0$ is given by \eqref{gain_k} in the following lemma:

\begin{lemma}\label{lem_gain_k}
Consider the robot model \eqref{robot_model_polar} under the control law \eqref{control_new} where $f(r, \dot{r})$ is given by \eqref{control}. If a stable circular motion exists with equilibrium $[r_d, {\pi}/{2}]^\top$, the control gain $k$ is given by \eqref{gain_k}. Further, the robot moves in the clockwise direction on the desired circle $\mathcal{C}_d$. 
\end{lemma}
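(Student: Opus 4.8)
The plan is to establish Lemma~\ref{lem_gain_k} by imposing the equilibrium conditions directly on the closed-loop dynamics. A stable circular motion on $\mathcal{C}_d$ means that both $\dot{r}$ and $\dot{\theta}$ vanish at the equilibrium $[r_d, \pi/2]^\top$. First I would check $\dot{r} = 0$: from \eqref{robot_model_polar}, $\dot{r} = -V\cos\theta$, which is automatically zero at $\theta = \pi/2$, so this condition carries no information about $k$. The substantive condition is therefore $\dot{\theta} = 0$, i.e., $\Omega + (V\sin\theta)/r = 0$ evaluated at $r = r_d$, $\theta = \pi/2$.

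Next I would substitute the control law \eqref{control} into the expression for $\dot{\theta}$ and evaluate at the equilibrium. At $\theta = \pi/2$ we have $\sin\theta = 1$ and $\cos\theta = 0$, so the first term of \eqref{control} becomes $(V/r_d)(1 - 1) = 0$ and the third (BLF) term vanishes because of the $\cos\theta$ factor. Hence $\Omega|_{\rm eq} = -kV\cos[\sin^{-1}(r_a/r_d)]$. Plugging into $\dot{\theta} = 0$ gives
\begin{equation}\label{plan_eq_condition}
	-kV\cos\!\left[\sin^{-1}\!\left(\frac{r_a}{r_d}\right)\right] + \frac{V\sin(\pi/2)}{r_d} = 0.
\end{equation}
Using $\cos[\sin^{-1}(r_a/r_d)] = \sqrt{r_d^2 - r_a^2}/r_d$ (valid since $r_d > r_a$ by Assumption~\ref{assumption_radii}) and $\sin(\pi/2) = 1$, equation \eqref{plan_eq_condition} reduces to $k\sqrt{r_d^2 - r_a^2}/r_d = 1/r_d$, which solves to $k = 1/\sqrt{r_d^2 - r_a^2}$, recovering \eqref{gain_k}. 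This is the core algebraic step and it is short; the value of $k$ is forced uniquely by requiring that $[r_d, \pi/2]^\top$ be an actual equilibrium of the closed loop.

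For the direction-of-rotation claim, I would determine the sign of $\Omega$ at the equilibrium and invoke the sign convention stated after \eqref{robot_model_cartesion}. From the computation above, $\Omega|_{\rm eq} = -kV\sqrt{r_d^2 - r_a^2}/r_d = -V/r_d < 0$, since $k > 0$, $V > 0$, and $r_d > 0$. Because the paper adopts the convention that $\Omega < 0$ corresponds to clockwise motion, the robot circumnavigates $\mathcal{C}_d$ in the clockwise sense. As a consistency check, the equilibrium turn-rate magnitude $|\Omega| = V/r_d$ matches the expected value $|V/\Omega_0| = r_d$ for circular motion of radius $r_d$, confirming the robot traverses exactly the desired circle.

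The main obstacle here is not analytical difficulty but precision in interpreting ``if a stable circular motion exists''; the lemma assumes existence of the equilibrium and derives the necessary value of $k$, rather than proving stability (which is handled by Theorem~\ref{thm_main_results} and the BLF construction). I would therefore be careful to frame the argument as a \emph{necessary condition}: given that $[r_d, \pi/2]^\top$ is an equilibrium, $k$ must equal \eqref{gain_k}. The only subtlety worth flagging is ensuring the branch of $\sin^{-1}$ is the principal one so that the cosine is nonnegative, which is guaranteed by $0 \le \sin^{-1}(r_a/r_d) \le \pi/2$ for $r_a/r_d \in (0,1)$, exactly as used in the proof of Lemma~\ref{lem_phi}.
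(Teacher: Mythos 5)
Your proof is correct and follows essentially the same route as the paper: evaluate the control \eqref{control} at $[r_d,\pi/2]^\top$ (where the first and third terms vanish since $\sin\theta=1$ and $\cos\theta=0$), force the resulting turn rate to match that of circular motion of radius $r_d$, and solve for $k$, with the sign $\Omega|_{\rm eq}=-V/r_d<0$ giving clockwise rotation. The only cosmetic difference is that you derive the constraint from the equilibrium condition $\dot{\theta}=0$ in \eqref{robot_model_polar}, which produces the signed equation (and hence the rotation direction) in one step, whereas the paper equates the magnitude $|\Omega|=V/r_d$ kinematically and then checks the sign of $\Omega$ separately.
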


\begin{proof}
Let a stable circular motion exist with equilibrium $[r_d, {\pi}/{2}]^\top$. Since the robot moves with the constant speed $V$, the magnitude of the turn-rate of the robot must be equal to ${V}/{r_d}$ on the circle $\mathcal{C}_d$. Note that the robot cannot stay within $\mathcal{C}_a$ due to zero control, it follows from \eqref{control} that the magnitude $|\Omega|$ of the turn-rate $\Omega$ must be equal to ${V}/{r_d}$ at the equilibrium $[r_d, {\pi}/{2}]^\top$. In other words, $|\Omega|_{[r_d, {\pi}/{2}]^\top} = |- k V \cos(\sin^{-1}({r_a}/{r_d}))|$ must be equal to ${V}/{r_d}$. On equating both, we get $k$ as in \eqref{gain_k}. Moreover, it can be verified from \eqref{control} that if $r = r_d$, $\Omega < 0$, i.e., the robot moves in the clockwise direction on the circle $\mathcal{C}_d$. This completes the proof.  
\end{proof}

\begin{remark}
One of the closely related works in this direction, but without consideration of the safety aspect from the hostile target $H$, is addressed in \cite{cao2015uav}, where the control $\Omega$ is given by 
\begin{equation}\label{control_old}
\hspace*{-0.3cm}\Omega =
	\begin{cases}
		k\left[-V\cos\left(\sin^{-1}\left(\frac{r_a}{r(t)}\right)\right) - \dot{r}(t)\right], & r(t) \geq r_a,\\
		0, & \text{otherwise}.
	\end{cases}
\end{equation} 
Here, the control gain $k$ is the same as \eqref{gain_k}. Depending on $k$, it may be possible that the robot does not even enter the $\mathcal{C}_a$ circle, and hence, entering into the safety circle $\mathcal{C}_s$ is avoided. However, this is possible only for the sufficiently large $k$. In some cases, entering into $\mathcal{C}_s$ may also be avoided by increasing the radius $r_a$ of $\mathcal{C}_a$ in the sense that if the boundary of $\mathcal{C}_a$ is sufficiently far from the hostile target $H$, it may happen that the robot does not move close to the safety circle $\mathcal{C}_s$. Nevertheless, this depends on the robot's initial condition and is just a simplified observation that might work in some special cases. Further, it is worth noticing from \eqref{gain_k} that the gain $k$ is large for large $r_a$, and $k \to \infty$ if $r_a \to r_d$. In practice, we cannot increase $k$ beyond a certain limit, since large $k$ means applying the high force on the robot as the control \eqref{control_old} is proportional to the gain $k$, which is undesirable due to physical constraints of a robot. In this work, we present an approach that alleviates such a requirement of large gain $k$ and ensures avoiding the safety circle $\mathcal{C}_s$ at all times.   
\end{remark}

\section{Selection of $\delta$ and Stability Analysis}\label{section_4}
In this section, we propose a selection criterion for the design parameter $\delta$ in \eqref{W} such that the condition \eqref{condition_angle} is guaranteed. This is followed by the stability analysis of the desired equilibrium, given such a choice of $\delta$. 

\subsection{Selection of $\delta$}

\begin{theorem}\label{thm_bound_delta}
For the given radii $r_d$, $r_a$ and $r_s$ satisfying Assumption~\ref{assumption_radii} and under the conditions given in Theorem~\ref{thm_main_results}, the inequality \eqref{condition_angle} holds if $\delta$ is chosen to satisfy:
\begin{equation}\label{bound_delta}
\delta \leq ({1}/{\beta})\tan^{-1}(\beta) + \ln({r_d}/{r_a}) - ({r_s}/{r_a}),
\end{equation}
where $\beta = \sqrt{r_d^2 - r_a^2}/{r_a}$. 
\end{theorem}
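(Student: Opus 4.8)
The plan is to evaluate the function $\eta$ precisely at the instant the robot crosses the auxiliary circle $\mathcal{C}_a$, and then exploit the invariance $\eta < \delta$ guaranteed by Theorem~\ref{thm_main_results}(a). The entry point $\mathcal{A}$ is characterized by $r(t) = r_a$ and $\theta(t) = \theta_a$, so the first task is to compute $\phi(r_a)$. Using the closed-form expression for $\phi(r)$ derived just before \eqref{W} and substituting $r = r_a$, the terms containing $\sqrt{r^2 - r_a^2}$ and $\tan^{-1}(\sqrt{r^2 - r_a^2}/r_a)$ vanish at the lower argument, leaving $\phi(r_a) = -k\sqrt{r_d^2 - r_a^2} + k r_a \tan^{-1}(\sqrt{r_d^2 - r_a^2}/r_a) + \ln(r_d/r_a)$. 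Inserting $k = 1/\sqrt{r_d^2 - r_a^2}$ from \eqref{gain_k} and recognizing $\beta = \sqrt{r_d^2 - r_a^2}/r_a$ (so that $k r_a = 1/\beta$), this collapses neatly to $\phi(r_a) = -1 + (1/\beta)\tan^{-1}(\beta) + \ln(r_d/r_a)$.

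The second step is to combine this with the definition \eqref{eta}. Since $\eta(r_a, \theta_a) = 1 - \sin\theta_a + \phi(r_a)$, the additive constant $1$ cancels the $-1$ in $\phi(r_a)$, giving $\eta(r_a, \theta_a) = -\sin\theta_a + (1/\beta)\tan^{-1}(\beta) + \ln(r_d/r_a)$. Because the closed-loop trajectory remains in the invariant set $\Theta$, Theorem~\ref{thm_main_results}(a) ensures $\eta(r_a, \theta_a) < \delta$ at the entry instant; membership in $\Theta$ requires $\theta_a \in (0, \pi)$, which is supplied by the separately established positivity of the bearing angle.

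The third step is purely algebraic: rearranging $\eta(r_a, \theta_a) < \delta$ yields $\sin\theta_a > (1/\beta)\tan^{-1}(\beta) + \ln(r_d/r_a) - \delta$, and substituting the hypothesis \eqref{bound_delta}, namely $\delta \leq (1/\beta)\tan^{-1}(\beta) + \ln(r_d/r_a) - r_s/r_a$, forces the right-hand side to be at least $r_s/r_a$. Hence $\sin\theta_a > r_s/r_a$, which is exactly the condition \eqref{condition_angle}; by \eqref{r_min} this gives $r_{\min} = r_a \sin\theta_a \geq r_s$, so the safety circle $\mathcal{C}_s$ is never penetrated.

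I do not anticipate a serious obstacle in the implication itself; the only delicate computation is the bookkeeping that produces the clean cancellation in $\phi(r_a)$. The more substantive point, which I would address to ensure the theorem is non-vacuous, is to verify that the right-hand side of \eqref{bound_delta} is strictly positive so that an admissible $\delta > 0$ actually exists. For this I would invoke Assumption~\ref{assumption_radii} together with Shafer's inequality \eqref{shafer_inequality} (noting that $\sqrt{1+\beta^2} = r_d/r_a$, so $(1/\beta)\tan^{-1}(\beta) > 3r_a/(r_a + 2r_d)$) and the logarithmic inequality \eqref{logarithmic_inequality} (giving $\ln(r_d/r_a) \geq 1 - r_a/r_d$), and then show the resulting lower bound exceeds $r_s/r_a$ using the triangle-type relations among the radii. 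This feasibility estimate is where the radii assumptions genuinely enter, and it is the part I expect to demand the most care.
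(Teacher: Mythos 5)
Your proposal is correct and follows essentially the same route as the paper: evaluate $\phi(r_a)$ in closed form (yielding $\phi(r_a) = -1 + (1/\beta)\tan^{-1}(\beta) + \ln(r_d/r_a)$), invoke Theorem~\ref{thm_main_results}(a) at the entry point $\mathcal{A}$ to get $\eta(r_a,\theta_a) < \delta$, and rearrange with \eqref{bound_delta} to conclude $\sin\theta_a > r_s/r_a$. Your closing feasibility check (positivity of the right-hand side via Shafer's and the logarithmic inequalities together with $r_a^2 > r_d r_s$) is exactly what the paper isolates as Lemma~\ref{lem_bound_Delta}, which additionally establishes the upper bound $\Delta < 1$ used later in Theorem~\ref{thm_range_theta}.
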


Before proving Theorem~\ref{thm_bound_delta}, we first show that the RHS of \eqref{bound_delta} is always positive, which is required to fulfill the condition $\eta(r, \theta) < \delta$ in Theorem~\ref{thm_main_results}, since $\eta$ is always positive. 

\begin{lemma}\label{lem_bound_Delta}
Denote $\Delta \triangleq ({1}/{\beta})\tan^{-1}(\beta) + \ln({r_d}/{r_a}) - ({r_s}/{r_a})$ and assume that the radii $r_d$, $r_a$ and $r_s$ satisfy Assumption~\ref{assumption_radii}. Then, it holds that $0 < \Delta < 1$.   
\end{lemma}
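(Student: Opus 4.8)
The plan is to prove the two inequalities $\Delta < 1$ and $\Delta > 0$ separately, in each case replacing the transcendental terms $(1/\beta)\tan^{-1}(\beta)$ and $\ln(r_d/r_a)$ by the elementary bounds from Shafer's inequality \eqref{shafer_inequality} and the logarithmic inequality \eqref{logarithmic_inequality}, and then invoking the relevant part of Assumption~\ref{assumption_radii}. The one algebraic observation that makes everything clean is that, from $\beta = \sqrt{r_d^2 - r_a^2}/r_a$, we have $\sqrt{1 + \beta^2} = \sqrt{1 + (r_d^2 - r_a^2)/r_a^2} = r_d/r_a$; this lets me express the lower Shafer bound purely in terms of the radii.

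For the upper bound $\Delta < 1$, I would apply the right-hand side of \eqref{shafer_inequality} to get $(1/\beta)\tan^{-1}(\beta) < 1$ and the right-hand side of \eqref{logarithmic_inequality} to get $\ln(r_d/r_a) \leq r_d/r_a - 1$. Adding these and subtracting $r_s/r_a$ yields $\Delta < (r_d - r_s)/r_a$, and the argument closes immediately by Assumption~\ref{assumption_radii}(ii), since $r_d < r_s + r_a$ is exactly $(r_d - r_s)/r_a < 1$.

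The lower bound $\Delta > 0$ is the main obstacle, because now I must bound the transcendental terms from below and the resulting expression is not visibly positive. Here I would use the left-hand side of \eqref{shafer_inequality} together with $\sqrt{1+\beta^2} = r_d/r_a$ to write $(1/\beta)\tan^{-1}(\beta) > 3 r_a/(r_a + 2 r_d)$, and the left-hand side of \eqref{logarithmic_inequality} to write $\ln(r_d/r_a) \geq 1 - r_a/r_d$. Summing gives $(1/\beta)\tan^{-1}(\beta) + \ln(r_d/r_a) > 3 r_a/(r_a + 2 r_d) + 1 - r_a/r_d$. The key step is then to show this lower bound is itself at least $r_a/r_d$; clearing the positive denominator $r_d(r_a + 2 r_d)$ collapses the required inequality to $r_d^2 \geq r_a^2$, which holds by Assumption~\ref{assumption_radii}(i).

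The final step converts $(1/\beta)\tan^{-1}(\beta) + \ln(r_d/r_a) > r_a/r_d$ into positivity of $\Delta$ via Assumption~\ref{assumption_radii}(iii). Since $r_a^2 > r_d r_s$ is equivalent to $r_a/r_d > r_s/r_a$, I obtain $(1/\beta)\tan^{-1}(\beta) + \ln(r_d/r_a) > r_a/r_d > r_s/r_a$, i.e. $\Delta > 0$. The delicate point I expect to matter is that Assumption~\ref{assumption_radii}(iii) is genuinely essential here: the argument must route the lower estimate through the intermediate quantity $r_a/r_d$ rather than bounding $r_s/r_a$ directly, and it is precisely condition (iii) that bridges $r_a/r_d$ and $r_s/r_a$.
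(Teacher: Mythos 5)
Your proposal is correct and follows essentially the same route as the paper: both halves use the same Shafer and logarithmic bounds, the same substitution $\sqrt{1+\beta^2} = r_d/r_a$, and the same roles for Assumption~\ref{assumption_radii}(i)--(iii), with the lower bound routed through the intermediate quantity $r_a/r_d$ exactly as in the paper's proof. The only cosmetic difference is the order of steps in the lower bound (you verify $3r_a/(r_a+2r_d) + 1 - r_a/r_d \geq r_a/r_d$ first and then invoke (iii), whereas the paper applies (iii) first and then simplifies), which amounts to the identical algebra.
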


\begin{proof}
We first prove that $\Delta < 1$. Utilizing the upper bounds of the inequalities \eqref{shafer_inequality} and \eqref{logarithmic_inequality}, it can be observed that ${\tan^{-1}(\beta)}/{\beta} < 1$ because $\beta > 0$, and $\ln({r_d}/{r_a}) < ({r_d}/{r_a}) - 1$. Therefore, it follows that $\Delta < 1 + [({r_d}/{r_a}) - 1] - ({r_s}/{r_a}) = ({r_d - r_s})/{r_a}$. Since $r_d < r_a + r_s$ (Assumption~\ref{assumption_radii}), the preceding inequality implies that $\Delta < 1$. To prove that $\Delta > 0$, we use the lower bounds of the inequalities \eqref{shafer_inequality} and \eqref{logarithmic_inequality} to obtain $\Delta > {3}/({1 + 2\sqrt{1 + \beta^2}}) + [1 - ({r_a}/{r_d})] - ({r_s}/{r_a})$. Substituting for $\beta$, $\sqrt{1 + \beta^2} = \sqrt{1 + ((r_d^2 - r_a^2)/{r_a^2})} = {r_d}/{r_a}$. Further, since $r_a^2 > r_d r_s$ (Assumption~\ref{assumption_radii}), it follows that ${r_s}/{r_a} < {r_a}/{r_d} \implies -{r_s}/{r_a} > - {r_a}/{r_d}$. Exploiting these, it can be written that $\Delta > {3r_a}/(r_a + 2r_d) + [1 - ({r_a}/{r_d})] - ({r_a}/{r_d}) = {3r_a}/(r_a + 2r_d) + 1 - ({2r_a}/{r_d}) = (3r_a r_d + r_d(r_a + 2 r_d) - 2r_a(r_a + 2r_d))/(r_d(r_a + 2r_d)) = {2(r_d^2 - r_a^2)}/(r_d(r_a + 2r_d)) > 0$, since $r_d > r_a$. On combining, it implies $0 < \Delta < 1$. 
\end{proof}

Clearly, it follows from Lemma~\ref{lem_bound_Delta} and Theorem~\ref{thm_main_results} that $\delta$ in \eqref{bound_delta} satisfies $0 < \delta < 1$. We now proceed to prove Theorem~\ref{thm_bound_delta}. 

\begin{proof}[Proof of Theorem~\ref{thm_bound_delta}]
According to Theorem~\ref{thm_main_results}, since the condition $\eta(r, \theta) < \delta$ is satisfied in $\Theta$, it must also hold at the entry point $\mathcal{A}$ in Fig.~\ref{fig_solution_approach} where $r(t) = r_a$ and $\theta(t) = \theta_a$. Consequently, it must be satisfied that $\eta(r_a, \theta_a) < \delta$. Note that $\eta(r_a,\theta_a) = 1 - \sin(\theta_a) + \phi(r_a) \iff \sin\theta_a = ({1}/{\beta}) \tan^{-1}(\beta) + \ln(r_d/r_a) - \eta(r_a,\theta_a) > ({1}/{\beta}) \tan^{-1}(\beta) + \ln(r_d/r_a) - \delta$. Now, using \eqref{bound_delta}, the preceding inequality implies that $\sin\theta_a > ({1}/{\beta}) \tan^{-1}(\beta) + \ln(r_d/r_a) - ({1}/{\beta}) \tan^{-1}(\beta) - \ln(r_d/r_a) + ({r_s}/{r_a})  = {r_s}/{r_a}$. 
\end{proof}

Next, we state one of the crucial results about the bearing angle $\theta(t)$ in the following theorem. 

\begin{theorem}\label{thm_range_theta}
Under the conditions given in Theorem~\ref{thm_main_results} with $\delta$ chosen according to Theorem~\ref{thm_bound_delta}, it holds that $\theta(t) \in (0, \pi)$ for all $t \geq 0$. 
\end{theorem}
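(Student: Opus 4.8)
The plan is to partition time according to whether the robot lies outside or inside the auxiliary circle $\mathcal{C}_a$, and to establish $\sin\theta(t) > 0$ (equivalently $\theta(t) \in (0,\pi)$) separately in each regime, then glue the two regimes together.

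First I would treat the intervals on which $r(t) \geq r_a$, where the control \eqref{control} is active and, by Theorem~\ref{thm_main_results}(a), the invariant bound $\eta(r(t),\theta(t)) < \delta$ holds. The key observation is that $\delta < 1$ (from Lemma~\ref{lem_bound_Delta} together with the admissible choice \eqref{bound_delta}), while $\eta = 1 - \sin\theta + \phi(r)$ with $\phi(r) \geq 0$ for $r \geq r_a$ by Lemma~\ref{lem_phi}. Chaining these gives $1 - \sin\theta \leq \eta < 1$, hence $\sin\theta > \phi(r) \geq 0$, so $\theta \in (0,\pi)$ on every such interval. I would also extract the entry geometry here: crossing $r = r_a$ inward requires $\dot r < 0$, i.e. $\cos\theta > 0$, so at each entry point $\mathcal{A}$ the angle satisfies $\theta_a \in (0,\pi/2)$.

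Next I would handle the intervals on which $r(t) < r_a$, where $\Omega = 0$ and the polar model \eqref{robot_model_polar} collapses to $\dot\theta = V\sin\theta / r$. Starting from $\theta_a \in (0,\pi/2) \subset (0,\pi)$, the sign of $\dot\theta$ is positive as long as $\theta \in (0,\pi)$, so $\theta$ increases monotonically and cannot return to $0$; meanwhile the straight-chord geometry forces the robot to exit at the symmetric point $\mathcal{B}$ with $\theta = \pi - \theta_a < \pi$ after a finite transit, so $\theta$ never attains $\pi$ either. Thus $\theta$ stays in $(\theta_a, \pi - \theta_a) \subset (0,\pi)$ throughout the inside-$\mathcal{C}_a$ segment.

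Finally I would glue the regimes: $\theta$ is continuous at each crossing of $r = r_a$, and since $\eta(r_a, \pi - \theta_a) = \eta(r_a, \theta_a) < \delta$, the exit state re-enters the invariant set $\Theta$ and the first argument resumes, so the conclusion is unaffected by the robot entering $\mathcal{C}_a$ never, once, or several times. The hard part will be the inside-$\mathcal{C}_a$ phase, because $\eta$ and the BLF $W$ are defined only for $r \geq r_a$ and therefore provide no barrier information there; the argument must instead be carried by the sign of $\dot\theta$ in the reduced dynamics, with careful bookkeeping of the entry and exit angles to ensure the two phases match up consistently.
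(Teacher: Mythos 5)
Your proof is correct and follows essentially the same route as the paper: split into the regimes $r(t) \geq r_a$ and $r(t) < r_a$, use the BLF invariance $\eta < \delta < 1$ together with $\phi(r) \geq 0$ (Lemma~\ref{lem_phi}) to force $\sin\theta > 0$ outside $\mathcal{C}_a$, and use the straight-line ($\Omega = 0$) geometry inside. The only minor differences are that the paper constrains the inside bearing via the entry condition $\sin\theta_a \geq r_s/r_a$ guaranteed by Theorem~\ref{thm_bound_delta}, whereas you use $\cos\theta_a > 0$ at entry plus the sign of $\dot\theta = V\sin\theta/r$ and chord symmetry, and you make explicit the gluing step (the exit state re-enters $\Theta$ because $\eta(r_a,\theta_a) = \eta(r_a,\pi-\theta_a)$), which the paper notes after Lemma~\ref{lem_phi} but leaves implicit in this proof.
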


\begin{proof}
We consider the following two cases:
\begin{itemize}[leftmargin=*]
	\item If $r(t) \geq r_a$. In this situation, it follows from Theorem~\ref{thm_main_results} that $1 - \sin\theta + \phi(r) < \delta$ where $\delta$ is given by \eqref{bound_delta}. Since $0 < \delta < 1$, it follows that $\sin\theta$ can never be non-positive otherwise $\eta(r, \theta) \geq 1$, irrespective of the value of $\phi(r)$ (which is always positive, see Lemma~\ref{lem_phi}). This contradicts the condition $\eta(r, \theta) < \delta$. Hence, $\theta(t) \in (0, \pi)$ if $r(t) \geq r_a$.
	\item When $r(t) < r_a$. In this situation, the robot moves inside $\mathcal{C}_a$. Since the control $\Omega$ is zero inside $\mathcal{C}_a$ and the robot enters $\mathcal{C}_a$ at an angle $\theta_a$, which satisfies the condition \eqref{condition_angle} for the choice of $\delta$ in \eqref{bound_delta}, it holds that $\theta(t) \in (\sin^{-1}({r_s}/{r_a}), \pi - \sin^{-1}({r_s}/{r_a})) \subset (0, \pi)$ for $r(t) < r_a$.   
\end{itemize}
Combining, it can be concluded that $\theta(t) \in (0, \pi), \forall t \geq 0$. 
\end{proof}

\subsection{Stability Analysis}
Building upon the above results, we now prove the stability of the closed-loop system for the choice of $\delta$ in \eqref{bound_delta} in the following theorem:
 
\begin{theorem}\label{thm_convergence_analysis}
Consider the robot-target engagement model \eqref{robot_model_polar} under the control law \eqref{control_new} where $f(r, \dot{r})$ is given by \eqref{control}. Let the initial conditions $[r(0), \theta(0)]^\top$ belong to the set $\Theta = \{[r(t), \theta(t)]^\top \in [r_a, \infty) \times (0, \pi) \mid \eta(r, \theta) < \delta\}$, where $\delta$ is chosen according to \eqref{bound_delta} with radii $r_d$, $r_a$ and $r_s$ subject to Assumption~\ref{assumption_radii}. If $\kappa > 0$ and $k$ is given by \eqref{gain_k}, then $r(t) \to r_d$ and $\theta(t) \to {\pi}/{2}$ as $t \to \infty$ and $r(t) \geq r_s$ for all $t \geq 0$.
\end{theorem}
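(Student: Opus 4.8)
The plan is to separate the statement into a safety claim, $r(t)\ge r_s$ for all $t\ge 0$, and a convergence claim, $r(t)\to r_d$ and $\theta(t)\to \pi/2$, obtaining the former almost directly from the earlier geometric results while reserving a LaSalle-type invariance argument for the latter. For safety I would argue by cases on the switching surface of \eqref{control_new}: whenever $r(t)\ge r_a$ we trivially have $r(t)\ge r_a>r_s$ by Assumption~\ref{assumption_radii}(i), so the only way the robot could approach $H$ below $r_s$ is during an excursion with $r(t)<r_a$, where $\Omega\equiv 0$ and the robot follows a straight chord of $\mathcal{C}_a$. On such a chord the closest approach is $r_{\min}=r_a\sin\theta_a$ by \eqref{r_min}, and the choice \eqref{bound_delta} of $\delta$ guarantees through Theorem~\ref{thm_bound_delta} that $\sin\theta_a\ge r_s/r_a$, i.e. $r_{\min}\ge r_s$. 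Hence $r(t)\ge r_s$ throughout.

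For convergence I would take $W(\eta)$ in \eqref{W} as the Lyapunov function. First I would note that the trajectory stays in a compact set: $\eta<\delta<1$ from Theorem~\ref{thm_main_results}(a) forces $\phi(r)<\delta$ and $\sin\theta>1-\delta$, which bounds $r$ from above (since $\phi(r)\to\infty$ as $r\to\infty$) and keeps $\theta$ in a compact subinterval of $(0,\pi)$; together with $r\ge r_s$ this confines the trajectory to a compact set. With $\dot{W}=-\kappa\eta\cos^2\theta/(\delta^2-\eta^2)^2\le 0$, I would invoke LaSalle's invariance principle and study the set $\{\dot W=0\}=\{\eta\cos^2\theta=0\}$, which is the union of the equilibrium $\{\eta=0\}$ and, since $\theta\in(0,\pi)$ by Theorem~\ref{thm_range_theta}, the line $\{\theta=\pi/2\}$. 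The decisive computation is to rule out invariance off the equilibrium: along $\theta\equiv\pi/2$ one has $\dot r=-V\cos\theta=0$, and substituting \eqref{control} and \eqref{gain_k} gives $\dot\theta=\Omega+V/r=(V/r)(1-k\sqrt{r^2-r_a^2})$, which vanishes only at $r=r_d$. Thus the largest invariant subset of $\{\dot W=0\}$ is the single point $[r_d,\pi/2]^\top$, and LaSalle then yields the desired convergence.

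The hard part will be justifying the invariance argument across the switching surface, since classical LaSalle presumes a smooth autonomous flow whereas $\Omega$ drops to zero on $r<r_a$ and neither $\eta$ nor $W$ is governed by $\dot W\le 0$ during excursions into $\mathcal{C}_a$. To close this gap I would exploit the observation recorded after Lemma~\ref{lem_phi} that $\eta$ takes the same value at the entry point $\mathcal{A}$ and the exit point $\mathcal{B}$ (both at $r=r_a$, with bearings $\theta_a$ and $\pi-\theta_a$), so that $W$ is preserved across every excursion and is therefore non-increasing along the entire hybrid trajectory. Consequently the non-increasing $\eta(t)$ converges to some $\eta_\infty\ge 0$, and it remains to prove $\eta_\infty=0$: any $\eta_\infty>0$ would eventually trap the trajectory near a level set on which, by the computation above, $\cos\theta$ cannot vanish identically, forcing a strict further decrease of $\eta$ and contradicting its convergence to $\eta_\infty$. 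This monotonicity-and-invariance bookkeeping across the switch is the delicate step; the remainder reduces to the earlier theorems and the short explicit evaluation of $\dot\theta$ at $\theta=\pi/2$.
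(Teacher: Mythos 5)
Your safety argument and your invariant-set computation coincide with the paper's own proof: the paper also obtains $r(t)\ge r_s$ from the entry-angle condition $\sin\theta_a\ge r_s/r_a$ (Theorem~\ref{thm_bound_delta}) combined with the chord geometry \eqref{r_min}, and its Step~2 is exactly your LaSalle computation, namely that on $\{\theta=\pi/2\}$ one has $\dot r=0$ and $\dot\theta=(V/r)\bigl(1-k\sqrt{r^2-r_a^2}\bigr)$, which vanishes only at $r=r_d$ by \eqref{gain_k}. The genuine divergence, and the place where your proposal has a gap, is the treatment of the switched dynamics. Your closing step---``any $\eta_\infty>0$ \ldots forc[es] a strict further decrease of $\eta$, contradicting its convergence to $\eta_\infty$''---is not a contradiction: a strictly decreasing quantity can perfectly well converge to a positive limit. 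What you actually need is that the decrease of $\eta$ per unit time (or per excursion) is bounded away from zero whenever $\eta\ge\eta_\infty>0$, and that is not automatic, because the trajectory could a priori spend most of its time with $\theta$ hovering near $\pi/2$, where $\cos^2\theta$, and hence $|\dot\eta|$ in \eqref{eta_dot}, is arbitrarily small, while $r\ne r_d$. Closing this requires a Barbalat-type argument: on the portions of the trajectory with $r\ge r_a$, $\dot\eta$ is uniformly continuous and $\eta$ converges, so $\dot\eta\to 0$, hence $\cos\theta\to 0$ and $\theta\to\pi/2$; then $\eta\to\eta_\infty$ forces $\phi(r)\to\eta_\infty>0$, which keeps $r$ bounded away from $r_d$, whence $\dot\theta=(V/r)\bigl(1-k\sqrt{r^2-r_a^2}\bigr)+o(1)$ is bounded away from zero, contradicting $\theta\to\pi/2$. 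Even this repair needs care if the robot re-enters $\mathcal{C}_a$ infinitely often, since the time axis is then punctured by intervals on which $\dot W\le 0$ is unavailable; your observation that $\eta(\mathcal{A})=\eta(\mathcal{B})$ (which the paper records after Lemma~\ref{lem_phi} but does not use here) makes $\eta$ monotone across excursions, but it does not by itself restore the invariance principle for the resulting hybrid trajectory, which you yourself flag as the delicate point without resolving it.

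The paper sidesteps this issue with a different decomposition. Its Step~1 argues that the robot enters $\mathcal{C}_a$ only finitely many times: $\eta$ is driven into an arbitrarily small sublevel set $\Xi=\{\eta\le\epsilon\}$ in finite time, and $\Xi$ is positively invariant and, for $\epsilon<\phi(r_a)$, disjoint from $\{r\le r_a\}$, since $\eta\ge\phi(r_a)>0$ whenever $r=r_a$. After the resulting time $t^\star$ the closed loop is a single smooth autonomous vector field on $\{r\ge r_a\}$, and classical LaSalle (the paper's Step~2, identical to your computation) finishes the proof. So the constructive fix for your write-up is either to adopt that two-step decomposition (prove finitely many entries first, then apply LaSalle on the tail), or to replace your ``strict decrease'' contradiction with the quantitative Barbalat argument sketched above; as written, the final inference does not follow.
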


\begin{proof}
Without loss of generality, we assume that the robot enters $\mathcal{C}_a$ at point $\mathcal{A}$ at time instant $t_e$ and exits it at point $\mathcal{B}$ at the first subsequent time $t_x$\footnote{If the robot starts within $\mathcal{C}_a$ it eventually comes out of $\mathcal{C}_a$ due to $\Omega = 0$.}. The time interval $t_x - t_e$ is bounded (by ${2r_a}/{V}$) since the robot moves in the straight line inside $\mathcal{C}_a$. Depending upon the gain $\kappa$ in \eqref{control}, the robot may enter $\mathcal{C}_a$ multiple times (a detailed discussion on this is given in the next section), and the number of time instances $t_e$ is the same as $t_x$ since the robot cannot stay inside $\mathcal{C}_a$ as the control $\Omega = 0$ inside $\mathcal{C}_a$. As soon as the robot is outside $\mathcal{C}_a$ (i.e., $r(t) \geq r_a$), we now divide the proof into the two steps when the period $[t_e, t_x]$ is dropped: (i) At first, we show that the robot enters $\mathcal{C}_a$ circle only finite number of times, (ii) once it remains outside $\mathcal{C}_a$, the robot finally reach to the desired equilibrium.\par
\emph{Step 1:} Since $0 \leq \cos^2\theta \leq 1$ and ${1}/(\delta^2 - \eta^2) \leq {{\rm e}^{2W(0)}}/{\delta^2}$ almost everywhere when $[t_e, t_x]$ is excluded (see Theorem~\ref{thm_main_results}), it follows from \eqref{eta_dot} that $\dot{\eta} \leq -\mu$ where $\mu = {\kappa {\rm e}^{2W(0)}}/{\delta^2}$ is a positive constant. Since $\eta$ is bounded by $\delta$ (see Theorem~\ref{thm_main_results}) and has bounded derivative $\dot{\eta} \leq -\mu$, $\eta$ is uniformly continuous almost everywhere when $[t_e, t_x]$ is excluded. Integrating $\dot{\eta} \leq -\mu$ gives $\eta(r(t), \theta(t)) \leq \eta((0), \theta(0)) - \mu t$, implying that there exists a time instant $t^\star$ such that $\eta \leq \epsilon$ for some sufficiently small $\epsilon > 0$, when $[t_e, t_x]$ is excluded. In other words, there exists a time instant $t^\star$ such that the robot remains in the vicinity of the equilibrium $[r_d, {\pi}/{2}]^\top$ when $[t_e, t_x]$ is excluded, since $\Xi \triangleq \{[r(t), \theta(t)]^\top \in [r_a, \infty) \times (0, \pi) \mid \eta \leq \epsilon\}$ is a compact and positively invariant set (as $\eta$ is positive-definite and continuously differentiable, and satisfies $\dot{\eta} \leq 0$). Alternatively, this implies that there exists a time instant $t^\star$ such that $r(t^\star) \geq r_a$ for all $t \geq t^\star$ when $[t_e, t_x]$ is excluded, that is, the robot enters $\mathcal{C}_a$ only finite number of times and remains outside $\mathcal{C}_a$ for all $t \geq t^\star$. \par
\emph{Step~2:} For $t \geq t^\star$, we consider the BLF \eqref{W} whose time-derivative $\dot{W}$ is non-positive. Using LaSalle's invariance theorem \cite[Theorem~4.4]{khalil2002control}, it can be concluded that all the trajectories of \eqref{robot_model_polar} converge to the largest invariant set $\Gamma$ contained in the set $\Lambda \triangleq \{[r, \theta]^\top \in [r_a, \infty) \times (0, \pi) \mid \dot{W} = 0\}$. Note that $\dot{W} = 0$ if $\eta = 0$ or $\cos\theta = 0 \implies \theta = \pi/2$ in the set $\Lambda$. Clearly, $\eta = 0$ corresponds to the desired equilibrium $[r, \theta]^\top = [r_d, {\pi}/{2}]^\top$. On the other hand, $\theta = \pi/2$ corresponds to the multiple equilibrium points $[r, {\pi}/{2}]^\top$ in $\Lambda$ where $r \geq r_a$. We now show that $\theta = \pi/2$ holds only if $r = r_d$, and hence, the largest invariant set $\Gamma \subset \Lambda$ is indeed the desired equilibrium $[r_d, {\pi}/{2}]^\top$. If $\theta = \pi/2 \implies \dot{\theta} = 0$ in $\Lambda$. In this situation, it follows from \eqref{robot_model_polar} that $\dot{r} = 0$, i.e., $r$ is constant and $0 = ({V}/{r}) - k V \cos(\sin^{-1}({r_a}/{r})) \implies k = 1/\sqrt{r^2 - r_a^2}$, which is valid only if $r = r_d$ (see \eqref{gain_k}). Consequently, it follows that the robot asymptotically converges to the desired equilibrium, i.e., $[r(t), \theta(t)]^\top \to [r_d, {\pi}/{2}]^\top$, as $t \to \infty$.\par 
Step~1 and Step~2 jointly prove the first part of the theorem. Further, since it holds that the entry angle $\theta(t_e) = \theta_a \geq \sin^{-1}({r_s}/{r_a})$ at each $t_e$ because of the choice of $\delta$ in \eqref{bound_delta}, it is clear from Fig.~\ref{fig_solution_approach} that $r(t) \geq r_s$ for each interval $[t_e, t_x]$ when $r(t) < r_a$. Therefore, $r(t) \geq r_s, \forall t \geq 0$.  
\end{proof}

It is possible to show the exponential convergence to the desired equilibrium for the linearized closed-loop system \eqref{robot_model_polar}, under the control \eqref{control_new}. This implies that the convergence rate is bounded by some known rate near the equilibrium point, which shows robustness against disturbances in its vicinity \cite{sastry2011adaptive}. We have the following result in this direction: 

\begin{theorem}\label{thm_exponential_stability}
Consider the closed-loop system \eqref{robot_model_polar} under the control \eqref{control_new} where $f(r, \dot{r})$ is given by \eqref{control}. Then, $[r_d, {\pi}/{2}]^\top$ is a locally exponentially stable equilibrium.   
\end{theorem}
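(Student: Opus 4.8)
The plan is to apply Lyapunov's indirect method: linearize the closed-loop dynamics about $[r_d, \pi/2]^\top$ and verify that the resulting Jacobian is Hurwitz. The first thing I would observe is that, since $r_d > r_a$ by Assumption~\ref{assumption_radii}, there is a neighborhood of the equilibrium lying entirely in the region $\{r > r_a\}$, where the control is the smooth branch $f(r,\dot r)$ of \eqref{control_new}. Hence the vector field is continuously differentiable near the equilibrium and linearization is legitimate despite the switching in \eqref{control_new}; the switching boundary $r = r_a$ plays no role locally.

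Next I would substitute \eqref{control} into \eqref{robot_model_polar} to write the closed-loop field $F = [F_1, F_2]^\top$ with $F_1 = -V\cos\theta$ and, after cancellation of the $(V\sin\theta)/r$ terms, $F_2 = V/r - kV\cos(\sin^{-1}(r_a/r)) + \kappa\cos\theta/(\delta^2 - \eta^2)$. Evaluating at $[r_d,\pi/2]^\top$ re-confirms, consistently with Lemma~\ref{lem_gain_k}, that $\cos(\pi/2)=0$ gives $\dot r = 0$, while $k = 1/\sqrt{r_d^2 - r_a^2}$ from \eqref{gain_k} makes the first two terms of $F_2$ cancel so that $\dot\theta = 0$. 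I would also record the two equilibrium identities $\cos\theta = 0$ and $\eta = 0$ (the latter from Lemma~\ref{lem_phi} together with $\sin(\pi/2)=1$), since these are exactly what simplify the Jacobian.

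I would then compute $J = \partial F/\partial[r,\theta]^\top$ at the equilibrium. The entries $\partial F_1/\partial r = 0$ and $\partial F_1/\partial\theta = V\sin\theta = V$ are immediate. For $\partial F_2/\partial r$, the barrier term carries an explicit factor $\cos\theta$, so its $r$-derivative vanishes at equilibrium, and differentiating $V/r - kV\sqrt{r^2-r_a^2}/r$ with the relation $k\sqrt{r_d^2-r_a^2}=1$ leaves $\partial F_2/\partial r = -V/(r_d^2 - r_a^2)$. For $\partial F_2/\partial\theta$, only the barrier term depends on $\theta$, and using $\cos\theta=0,\ \eta=0$ its derivative collapses to the single damping entry $-\kappa/\delta^2$. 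This produces
\[
J = \begin{bmatrix} 0 & V \\ -\,V/(r_d^2 - r_a^2) & -\,\kappa/\delta^2 \end{bmatrix}.
\]

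Finally, I would read the stability off the $2\times2$ characteristic data: $\mathrm{tr}(J) = -\kappa/\delta^2 < 0$ and $\det(J) = V^2/(r_d^2 - r_a^2) > 0$, the latter positive precisely because $r_d > r_a$. Both eigenvalues therefore have strictly negative real part, so $J$ is Hurwitz, and by the linearization theorem for exponential stability \cite[Theorem~4.7]{khalil2002control}, $[r_d,\pi/2]^\top$ is locally exponentially stable. I do not expect a substantive obstacle here: the only real care is bookkeeping in the Jacobian, namely exploiting $\cos\theta = 0$ and $\eta = 0$ so that the cross terms arising from the barrier denominator $(\delta^2-\eta^2)$ drop out cleanly, leaving the simple Hurwitz structure above.
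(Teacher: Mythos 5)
Your proof is correct and follows essentially the same route as the paper's: linearization about $[r_d, \pi/2]^\top$, a Jacobian whose entries coincide with the paper's (your $-V/(r_d^2-r_a^2)$ is exactly the paper's $-k^2V$ since $k^2 = 1/(r_d^2-r_a^2)$), and the conclusion via \cite[Theorem~4.7]{khalil2002control}. The only cosmetic differences are that you verify the Hurwitz property through the trace/determinant criterion rather than the paper's explicit eigenvalue formula $\lambda = (-\kappa \pm \sqrt{\kappa^2 - 4k^2\delta^2V^2})/(2\delta^2)$, and you add a welcome (if brief) justification that the switching at $r = r_a$ is irrelevant locally because $r_d > r_a$.
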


\begin{proof}
Let us denote $f_r(r(t), \theta(t)) \triangleq \dot{r} =  -V\cos\theta$ and $f_{\theta}(r(t), \theta(t)) \triangleq \dot{\theta} = ({V}/{r}) - k V \cos(\sin^{-1}({r_a}/{r})) + {\kappa\cos\theta}/(\delta^2 - \eta^2)$. The linearization about the equilibrium $[r_d, {\pi}/{2}]^\top$ leads to the system $\dot{\xi} = A\xi$ where $\xi \triangleq [r(t), \theta(t)]^\top$ and $A = [A_{ij}] \in \mathbb{R}^{2 \times 2}$ with $A_{11} = {\partial f_r}/{\partial r}{|}_{[r_d, {\pi}/{2}]^\top} = 0, \ A_{12} = {\partial f_r}/{\partial \theta}{|}_{[r_d, {\pi}/{2}]^\top} = V, \ A_{21} = {\partial f_{\theta}}/{\partial r}{|}_{[r_d, {\pi}/{2}]^\top} = -k^2V, \ A_{22} = {\partial f_{\theta}}/{\partial \theta}{|}_{[r_d, {\pi}/{2}]^\top} = -{\kappa}/{\delta^2}$, where we obtained $A_{21}$ and $A_{22}$ at $[r_d, {\pi}/{2}]^\top$ using $A_{21} = -({V}/{r^2}) - {k V r_a^2}/(r^2\sqrt{r^2 - r_a^2}) + [{2\kappa \eta \cos\theta}/{(\delta^2 - \eta^2)^2}]({\partial \eta}/{\partial r}),\ A_{22} = -{\kappa \sin\theta}/(\delta^2 - \eta^2) + [{2\kappa \eta \cos\theta}{(\delta^2 - \eta^2)^2}]({\partial \eta}/{\partial \theta})$. The eigenvalues of matrix $A$ are given by $\lambda = (-\kappa \pm \sqrt{\kappa^2 - 4 k^2 \delta^2 V^2})/{2\delta^2}$. Clearly, $A$ is Hurwitz, irrespective of the sign of the term $\kappa^2 - 4 k^2 \delta^2 V^2$. If $\kappa^2 - 4 k^2 \delta^2 V^2 < 0$, the eigenvalues are complex conjugate with negative real part $-\kappa/2\delta^2$. If $\kappa^2 - 4 k^2 \delta^2 V^2 \geq 0$, both the eigenvalues are real and negative as $\sqrt{\kappa^2 - 4 k^2 \delta^2 V^2} < \kappa$. Now, the exponential convergence follows using \cite[Theorem~4.7]{khalil2002control}. This concludes the proof. 
\end{proof}

\section{Robot Motion Inside Auxiliary Circle}\label{section_5}
Note that the choice of the gain $\kappa > 0$ is flexible in \eqref{control}, as opposed to the gain $k$, which is fixed according to \eqref{gain_k} for the given radii. This section analyzes the (sufficient) condition on $\kappa$ characterizing the number of entries of the robot inside the $\mathcal{C}_a$ circle before stabilizing to the desired circle $\mathcal{C}_d$. 
  
\begin{theorem}\label{thm_kappa_conditions}
Consider the robot-target engagement model \eqref{robot_model_polar} subject to the control law \eqref{control_new} where $f(r, \dot{r})$ is given by \eqref{control}. Under the conditions given in Theorem~\ref{thm_main_results}, if the gain $\kappa$ in \eqref{control} is chosen such that  $\kappa \geq kV\delta^2{\rm e}^{-2W(0)}$ (resp., $0 < \kappa < kV\delta^2{\rm e}^{-2W(0)}$), then the robot can move inside the auxiliary circle $\mathcal{C}_a$ at most once (resp., more than once). 
\end{theorem}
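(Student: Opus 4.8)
The plan is to reduce the claim to a statement about the one‑dimensional radial motion after the robot first leaves $\mathcal{C}_a$, and to read off the threshold on $\kappa$ as a damping condition. First I would recast the outside dynamics ($r(t)\geq r_a$) as a dissipative second‑order system in $r$. Differentiating $\dot r=-V\cos\theta$ and substituting $\dot\theta$ from \eqref{control} gives $\ddot r=\sqrt{V^2-\dot r^2}\big[\tfrac{V}{r}(1-k\sqrt{r^2-r_a^2})-\tfrac{\kappa\dot r}{V(\delta^2-\eta^2)}\big]$, i.e.\ a restoring term that vanishes exactly at $r=r_d$ (by \eqref{gain_k}) together with a damping term $-\big(\tfrac{\kappa}{V(\delta^2-\eta^2)}\sqrt{V^2-\dot r^2}\big)\dot r$ whose coefficient grows with $\kappa$. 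The function $\eta$ in \eqref{eta} plays the role of the associated energy: whenever $\dot r=0$ (equivalently $\theta=\pi/2$) one has $\eta=\phi(r)$, and by \eqref{eta_dot} $\eta$ is nonincreasing with $\dot\eta$ concentrated on the intervals where $\cos\theta\neq0$. I would also use that "entering $\mathcal{C}_a$" is exactly "$r(t)$ reaching $r_a$ with $\dot r<0$," and that reaching $r=r_a$ at all forces $\eta=1-\sin\theta+\phi(r_a)\geq\phi(r_a)$.

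Next I would analyze a single excursion. Immediately after the first exit at $\mathcal{B}$ (where $r=r_a$, $\theta=\pi-\theta_a$, hence $\dot r>0$), $r$ increases until the first instant $\theta=\pi/2$, giving a local maximum at some $r_1$; since this is a maximum the restoring term is nonpositive there, which forces $r_1>r_d$. No re‑entry can occur during this rising phase, so re‑entry is possible only if the subsequent descent brings $r$ back down to $r_a$. Because $\eta$ equals $\phi(r_1)$ at the maximum and only decreases afterward, while reaching $r=r_a$ needs $\eta\geq\phi(r_a)$, a clean \emph{sufficient condition for no re‑entry} is $\phi(r_1)<\phi(r_a)$: once $\eta$ has fallen below $\phi(r_a)$ it stays there and $r=r_a$ becomes unreachable.

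It then remains to control the overshoot $r_1$ by the damping. The energy dissipated on the rising phase equals $\eta(\mathcal{B})-\phi(r_1)=(1-\sin\theta_a)+\phi(r_a)-\phi(r_1)$, so $\phi(r_1)<\phi(r_a)$ holds precisely when the dissipation exceeds the injected kinetic energy $1-\sin\theta_a$. I would bound the dissipation rate from \eqref{eta_dot} using the envelope $\delta^2-\eta^2\geq\delta^2 e^{-2W(0)}$ established in Theorem~\ref{thm_main_results}(c), which yields a maximal rate $\mu=\kappa/(\delta^2-\eta(0)^2)=\kappa e^{2W(0)}/\delta^2$, and compare it with the maximal rate $kV$ at which the potential $\phi$ can be accumulated along the trajectory (since $\phi'(r)<k$ and $|\dot r|\leq V$). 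The balance $\mu\geq kV$, i.e.\ $\kappa\geq kV\delta^2 e^{-2W(0)}$, is exactly the condition that the dissipation dominates, forcing $\phi(r_1)<\phi(r_a)$ and hence at most one entry; for $0<\kappa<kV\delta^2 e^{-2W(0)}$ the damping is too weak to certify $\phi(r_1)<\phi(r_a)$, the overshoot past $r_d$ can be large, and the ensuing descent can reach $r_a$, so a second entry is possible. This under‑/over‑damped dichotomy is consistent with the eigenvalue structure in Theorem~\ref{thm_exponential_stability}.

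I expect the overshoot estimate to be the main obstacle. The radial equation carries the state‑dependent factor $\sqrt{V^2-\dot r^2}$ and a damping coefficient that itself depends on $\eta$, so converting the heuristic "dissipation rate $\mu$ versus potential‑gain rate $kV$" into a rigorous bound on $\phi(r_1)$ that is \emph{uniform} over all admissible entry angles $\theta_a$ and initial energies $\eta(0)$ is delicate. My intended route is to integrate \eqref{eta_dot} against $dr$ over the rising phase, split the range at $r=r_d$ to exploit the monotonicity of $\phi$ on each side (Lemma~\ref{lem_phi}), and apply the envelope bound from Theorem~\ref{thm_main_results}(c) to extract the factor $\delta^2 e^{-2W(0)}$ and the comparison constant $kV$, thereby pinning down the stated threshold.
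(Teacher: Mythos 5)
Your reduction is genuinely different from the paper's, and its skeleton is sound: the radial equation you derive is correct, and the observation that any return to $r=r_a$ forces $\eta = 1-\sin\theta+\phi(r_a)\geq\phi(r_a)$, combined with $\dot\eta\leq 0$ from \eqref{eta_dot}, does give a clean sufficient no-re-entry criterion $\phi(r_1)<\phi(r_a)$ at the first overshoot maximum. The paper does no energy bookkeeping at all. Instead it proves a pointwise invariance statement (Lemma~\ref{lem_primary_result_kappa}): if at some $t_0$ the robot has $r(t_0)\geq r_a$ and bearing above the tangent angle $\sin^{-1}(r_a/r(t_0))$, then under $\kappa\geq kV\delta^2{\rm e}^{-2W(0)}$ the region $\{\theta\geq \sin^{-1}(r_a/r)\}$ is forward invariant, because on its boundary $\dot\theta = V/r+\bigl[\kappa/(\delta^2-\eta^2)-kV\bigr]\sqrt{r^2-r_a^2}/r$ is argued positive; this is then applied at the exit point $\mathcal{B}$, where $\theta(t_x)=\pi-\theta_a>\pi/2$. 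The multiple-entry direction and the finiteness of entries are handled by both you and the paper at the same (non-constructive) level: the sufficient mechanism fails, and finiteness follows from Step 1 of Theorem~\ref{thm_convergence_analysis}.

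The gap is exactly the step you flag as the main obstacle, and it is not a technicality: your route to the threshold cannot close as sketched. Forcing $\phi(r_1)<\phi(r_a)$ requires a \emph{lower} bound on the dissipation accumulated along the rising phase, but the envelope from Theorem~\ref{thm_main_results}(c) runs the other way: $\delta^2-\eta^2\geq\delta^2{\rm e}^{-2W(0)}$ yields $|\dot\eta|=\kappa\cos^2\theta/(\delta^2-\eta^2)\leq\kappa{\rm e}^{2W(0)}/\delta^2=\mu$, so $\mu$ is a ceiling on the dissipation rate, not a floor. Comparing this \emph{maximal} dissipation rate with the \emph{maximal} potential-gain rate $kV$ (your balance ``$\mu\geq kV$'') is a comparison of two upper bounds and implies nothing about which integral is larger; it reproduces the constant $kV\delta^2{\rm e}^{-2W(0)}$ only as an artifact of the bookkeeping. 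Worse, no useful floor exists: the rate carries the factor $\cos^2\theta$ (and your $dr$-integrand $|d\eta/dr|=\kappa|\cos\theta|/\bigl(V(\delta^2-\eta^2)\bigr)$ the factor $|\cos\theta|$), which vanishes exactly at the turning point $\theta=\pi/2$ that defines $r_1$, so the integrand degenerates precisely where the overshoot is decided, and splitting the integration at $r_d$ does not repair this. This is why the paper's tangent-cone argument is the one that closes: it needs only a single pointwise inequality, $\kappa/(\delta^2-\eta^2)\geq kV$ at the critical configuration $\theta=\sin^{-1}(r_a/r)$, rather than an integrated energy balance over an excursion whose length and entry angle are not uniformly controlled.
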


\begin{figure}[t]
	\centering{
		\includegraphics[width=3.0cm]{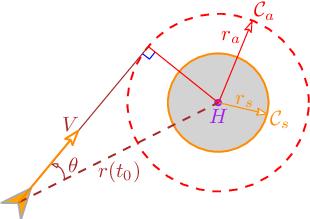}
		\caption{Geometry when robot's velocity vector is tangent to $\mathcal{C}_a$ at time instant $t_0$.}	
	\label{fig_tangent}}
	\vspace*{-15pt}
\end{figure}

Before proving Theorem~\ref{thm_kappa_conditions}, we first state the following lemma: 
\begin{lemma}\label{lem_primary_result_kappa}
Consider the robot-target engagement model \eqref{robot_model_polar} subject to the control law \eqref{control_new}. Let there be a time instant $t_0$ such that $r(t_0) \geq r_a$ and $\theta(t_0) \in (\theta_0, \pi)$ where $\theta_0 = \sin^{-1}({r_a}/{r(t_0)})$ as shown in Fig.~\ref{fig_tangent}. If the gain $\kappa$ in \eqref{control} is chosen such that $\kappa \geq kV\delta^2{\rm e}^{-2W(0)}$, then $r(t) \geq r_a, \forall t \geq t_0$. 
\end{lemma}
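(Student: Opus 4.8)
The plan is to track how well the robot's heading ``points away'' from $\mathcal{C}_a$ by introducing the tangent angle $\theta_0(r)\triangleq \sin^{-1}(r_a/r)$ for $r\ge r_a$, so that $\theta_0(r_a)=\pi/2$ and $\theta_0$ is strictly decreasing in $r$, together with the \emph{heading margin} $D(t)\triangleq \theta(t)-\theta_0(r(t))$. By hypothesis $D(t_0)>0$. The first step is to reduce the claim to forward invariance of the set $\{D\ge 0\}$: since the robot starts outside $\mathcal{C}_a$, it can only penetrate $\mathcal{C}_a$ by letting $r$ decrease to $r_a$; but at $r=r_a$ one has $\theta_0=\pi/2$ and $\dot r=-V\cos\theta$, so an inward crossing forces $\theta<\pi/2=\theta_0(r_a)$, i.e.\ $D<0$ there. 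Hence if $D(t)\ge 0$ is maintained, then at any instant with $r=r_a$ we have $\theta\ge \pi/2$, giving $\dot r\ge 0$, so $r$ cannot decrease through $r_a$. The degenerate touch $r=r_a,\ \theta=\pi/2$ is harmless, because there the closed-loop rate reduces to $\dot\theta|_{r=r_a}=V/r_a>0$, whence $\ddot r=V\sin\theta\,\dot\theta>0$ and $r$ has a strict local minimum at $r_a$.

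Next I would differentiate $D$ along the closed loop. Using \eqref{robot_model_polar} and the closed-loop heading rate $\dot\theta = V/r - kV\cos(\sin^{-1}(r_a/r)) + \kappa\cos\theta/(\delta^2-\eta^2)$ read off from \eqref{control}, together with $\dot\theta_0 = r_a V\cos\theta/(r\sqrt{r^2-r_a^2})$ (from $\dot r=-V\cos\theta$), and evaluating on the boundary $\{D=0\}$ where $\sin\theta=r_a/r$ and $\cos\theta=\sqrt{r^2-r_a^2}/r$, I expect
\[
\frac{r}{V}\,\dot D\Big|_{D=0} = 1 - \frac{r_a}{r} - k\sqrt{r^2-r_a^2} + \frac{\kappa\sqrt{r^2-r_a^2}}{V(\delta^2-\eta^2)}.
\]
The goal of the computation is to show that this right-hand side is nonnegative on $\{D=0\}$.

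The decisive step is to sign this expression using the gain condition. Since $1-r_a/r>0$ for $r>r_a$, it suffices that the control term dominate $k\sqrt{r^2-r_a^2}$, i.e.\ that $\kappa/(V(\delta^2-\eta^2))\ge k$. This is where Theorem~\ref{thm_main_results}(c) enters: along the trajectory $\eta$ is non-increasing and $\delta^2-\eta^2$ is controlled in terms of $\delta^2$ and $\mathrm{e}^{-2W(0)}$, and substituting the resulting estimate of $\delta^2-\eta^2$ into $\kappa/(V(\delta^2-\eta^2))\ge k$ is precisely what yields the stated threshold $\kappa\ge kV\delta^2\mathrm{e}^{-2W(0)}$. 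I expect this bounding step to be the main obstacle: the $\kappa$-term must beat $k\sqrt{r^2-r_a^2}$ uniformly over the reachable range of $r$ (which is itself bounded, since $\eta<\delta<1$ forces $\phi(r)<\delta$ and hence $r\le r_{\max}$), and the estimate is most delicate as $r\to r_a^+$, where both competing terms vanish at the same $\sqrt{r-r_a}$ rate so that only their coefficients decide the sign.

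Finally I would close the argument by a first-crossing (Nagumo-type) contradiction: if $D$ were to become negative, let $t_1>t_0$ be the first instant with $D(t_1)=0$; by minimality $\dot D(t_1)\le 0$, contradicting the strict positivity of $\dot D|_{D=0}$ established above. Therefore $D(t)\ge 0$ for all $t\ge t_0$, and by the reduction in the first paragraph this yields $r(t)\ge r_a$ for all $t\ge t_0$, as claimed.
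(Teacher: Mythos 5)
Your construction is, in substance, the same as the paper's: the paper's proof also works on the tangency set $\theta(t) = \sin^{-1}(r_a/r(t))$ (its Step~1 attempts to show $\dot\theta \geq V/r$ there, which dominates the drift $r_aV/r^2$ of the tangency angle), and its Step~2 is exactly your observation that $D \geq 0$ forces $\theta \geq \pi/2$, hence $\dot r \geq 0$, at $r = r_a$. Your margin function $D$, the expression for $(r/V)\,\dot D\big|_{D=0}$, and the first-crossing argument are all correct, and they formalize the paper's two interdependent claims more cleanly than the paper does.

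The gap is precisely the step you postponed, and it cannot be closed in the way you describe. On $\{D=0\}$ you need the \emph{lower} bound $\kappa/(V(\delta^2-\eta^2)) \geq k$, i.e., the \emph{upper} bound $\delta^2 - \eta^2 \leq \kappa/(kV)$. Part~(c) of Theorem~\ref{thm_main_results} supplies the reverse: since $\delta^2 {\rm e}^{-2W(\eta(0))} = \delta^2 - \eta^2(0)$, part~(c) is equivalent to $\eta(t) \leq \eta(0)$, i.e., $\delta^2 - \eta^2(t) \geq \delta^2 {\rm e}^{-2W(\eta(0))}$. Hence the pointwise condition you need, $\kappa \geq kV(\delta^2 - \eta^2(t))$, \emph{implies} the stated threshold $\kappa \geq kV\delta^2{\rm e}^{-2W(0)}$ but is not implied by it; the threshold is necessary for this argument, not sufficient. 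The only a priori upper bound available is $\delta^2 - \eta^2 \leq \delta^2$ (from $\eta \geq 0$), which closes the proof only under the stronger hypothesis $\kappa \geq kV\delta^2$. Whenever $\eta$ at a boundary hit is strictly below $\eta(0)$ (typical, since $\dot\eta \leq 0$), the coefficient $\kappa/(V(\delta^2-\eta^2)) - k$ can be negative under the stated threshold, and, as you yourself note, near $r = r_a$ the term $1 - r_a/r = O(r-r_a)$ cannot rescue the sign against a negative multiple of $\sqrt{r^2-r_a^2} \sim \sqrt{2r_a}\sqrt{r-r_a}$. For completeness: the paper's own Step~1 invokes part~(c) in the same reversed direction — it concludes $\delta^2{\rm e}^{-2W(0)}/(\delta^2-\eta^2) - 1 \geq 0$ from $\delta^2/(\delta^2-\eta^2) \leq {\rm e}^{2W(0)}$, which in fact yields $\leq 0$ — so your reconstruction faithfully reproduces the paper's strategy, including the one step that, as written, does not go through; repairing it requires either the stronger gain condition $\kappa \geq kV\delta^2$ or a genuinely sharper, trajectory-specific upper bound on $\delta^2 - \eta^2$ at boundary hits.
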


Although a proof of Lemma~\ref{lem_primary_result_kappa} follows from Step 1 of the proof of Theorem~\ref{thm_convergence_analysis} by replacing $t_0$ by $t^\star$, we provide below geometrical reasoning for the same that will be helpful to prove Theorem~\ref{thm_kappa_conditions}. 

\begin{proof}
	The proof contains the following two steps:\par
\emph{Step~1:} We first prove that $\theta(t) \in [\sin^{-1}({r_a}/{r(t)}), \pi)$ for all $t \in [t_0, t_0+T]$, if $r(t) \geq r_a$ for all $t \in [t_0, t_0+T]$, where $T$ is any positive constant. If $\theta(t) = \sin^{-1}({r_a}/{r(t)})$, it follows that $\dot{\theta}(t) = ({V}/{r}) + [({\kappa}/(\delta^2 - \eta^2)) - kV]({\sqrt{r^2 - r_a^2}}/{r})$. For $\kappa \geq kV\delta^2{\rm e}^{-2W(0)}$, $\dot{\theta}(t) \geq ({V}/{r}) + kV[{\delta^2{\rm e}^{-2W(0)}}(\delta^2 - \eta^2) - 1]({\sqrt{r^2 - r_a^2}}/{r})$. Since ${\delta^2}/(\delta^2 - \eta^2) \leq {\rm e}^{2W(\eta(0))}$, according to the part (c) of Theorem~\ref{thm_main_results}, it follows that $\dot{\theta}(t) \geq {V}/{r} > 0$, i.e., $\theta(t)$ cannot be smaller than  $\sin^{-1}({r_a}/{r(t)})$ for all $t \in [t_0, t_0+T]$. Hence, this step is proved in conjunction with Theorem~\ref{thm_range_theta}. \par 
\emph{Step~2:} We now prove that $r(t) \geq r_a$ for all $t \in [t_0, t_0+T]$, if $\theta(t) \in [\sin^{-1}({r_a}/{r(t)}), \pi)$ for all $t \in [t_0, t_0+T]$, where $T$ is any positive constant. When $r(t) = r_a$, $\sin^{-1}({r_a}/{r(t)}) = \pi/2$. Then, $\theta(t) \in [\sin^{-1}({r_a}/{r(t)}), \pi)$ implies $\theta(t) \in [{\pi}/{2}, \pi)$, for which, $\dot{r} \geq 0$ (see \eqref{robot_model_polar}), i.e., $r(t)$ can not be smaller than $r_a$. \par 
By combining both steps, we can see that if one claim holds in a step, the other also holds. When both claims are true at time $t_0$, they remain true until one fails. Because of the interdependency between the claims, it is impossible for them to fail simultaneously or for one to fail before the other. Hence, both claims hold for all time.
\end{proof}

\begin{proof}[Proof of Theorem~\ref{thm_kappa_conditions}]
When the robot enters $\mathcal{C}_a$ at time instant $t_e$, where $r(t_e) = r_a$ and $\sin\theta(t_e) = \sin\theta_a \geq {r_s}/{r_a}$ (see \eqref{condition_angle}), it follows from Step 2 of the proof of Lemma~\ref{lem_primary_result_kappa} that $\theta(t_e) \in [\sin^{-1}({r_s}/{r_a}), {\pi}/{2})$. Since $\Omega = 0$ inside $\mathcal{C}_a$, the robot exits $\mathcal{C}_a$ at some time $t_x$ where it holds that $\theta(t_x) = \pi - \theta_a$ (see Fig.~\ref{fig_solution_approach}). As a consequence, $\theta(t_x) \in ({\pi}/{2}, \pi - \sin^{-1}({r_s}/{r_a})]$. Since $\sin^{-1}({r_a}/{r(t_x)}) = {\pi}/{2}$, it can be rewritten that $\theta(t_x) \in (\sin^{-1}({r_a}/{r(t_x)}), \pi - \sin^{-1}({r_s}/{r_a})]$ when $r(t_x) = r_a$. Clearly, $(\sin^{-1}({r_a}/{r(t_x)}), \pi - \sin^{-1}({r_s}/{r_a})] \subset (\sin^{-1}({r_a}/{r(t_x)}), \pi)$. Now, by considering $t_x = t_0$ as in Lemma~\ref{lem_primary_result_kappa}, it immediately follows that the robot will never move inside $\mathcal{C}_a$. In other words, it can move inside $\mathcal{C}_a$ at most once if  $\kappa \geq kV\delta^2{\rm e}^{-2W(0)}$. On the other hand, if $0 < \kappa < kV\delta^2{\rm e}^{-2W(0)}$, it can be verified, using the part (c) of Theorem~\ref{thm_main_results}, that the term ${\kappa}/(\delta^2 - \eta^2) - kV < 0$. As a consequence, $\dot{\theta}(t)$ may assume negative values whenever $V < \sqrt{r^2 - r_a^2}$. This implies that Step 1 of the proof of Lemma~\ref{lem_primary_result_kappa} is no longer valid, and hence, Step 2 because of the dependency of the two claims. This indicates that the robot might enter inside $\mathcal{C}_a$ more than once if $0 < \kappa < kV\delta^2{\rm e}^{-2W(0)}$. However, the number of entries to $\mathcal{C}_a$ are finite, as proved in the Step 1 of Theorem~\ref{thm_convergence_analysis}. 
\end{proof}

\begin{figure}[t!]
	\centering{
		\subfigure[Single entry in $\mathcal{C}_a$]{\hspace*{-0.2cm}\includegraphics[width=3.3cm]{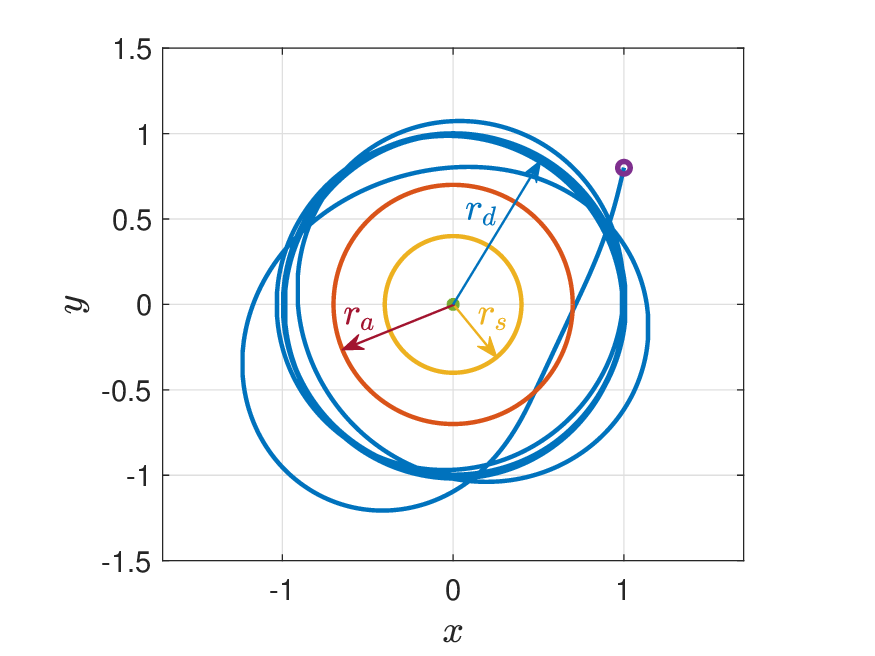}\label{single_entry}}\hspace*{-0.2cm}
		\subfigure[Multiple entries in $\mathcal{C}_a$ ]{\includegraphics[width=3.3cm]{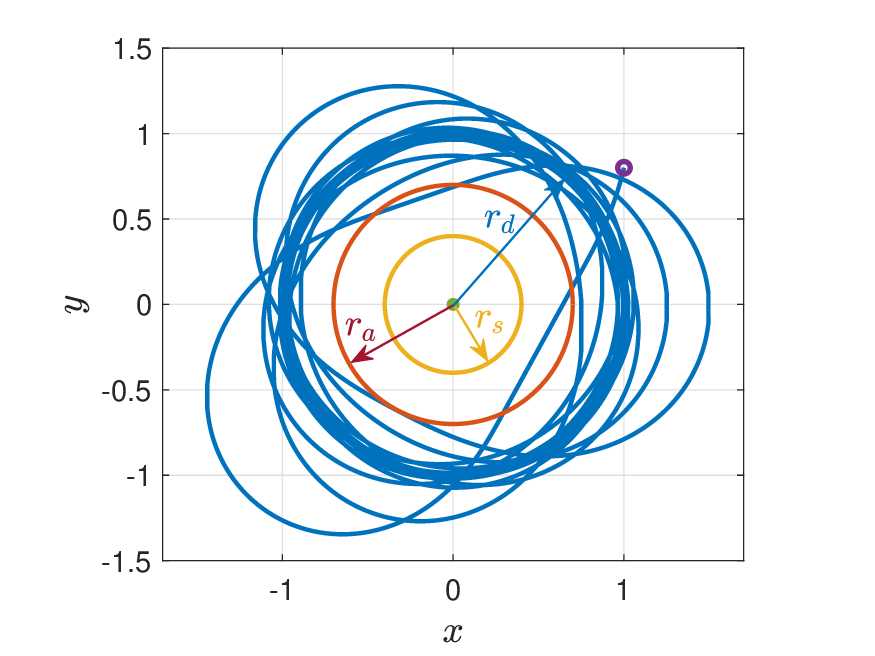}\label{multi_entry}}
		\caption{Trajectory of the robot when it enters inside $\mathcal{C}_a$ (a) only one time with $\kappa = 0.05$ (b) multiple times with $\kappa = 0.015$.}
		\label{fig_trajectory}}
		\vspace*{-15pt}
\end{figure}

\begin{figure}[t!]
	\centering{
		\subfigure[Range]{\hspace*{-0.3cm}\includegraphics[width=3.3cm]{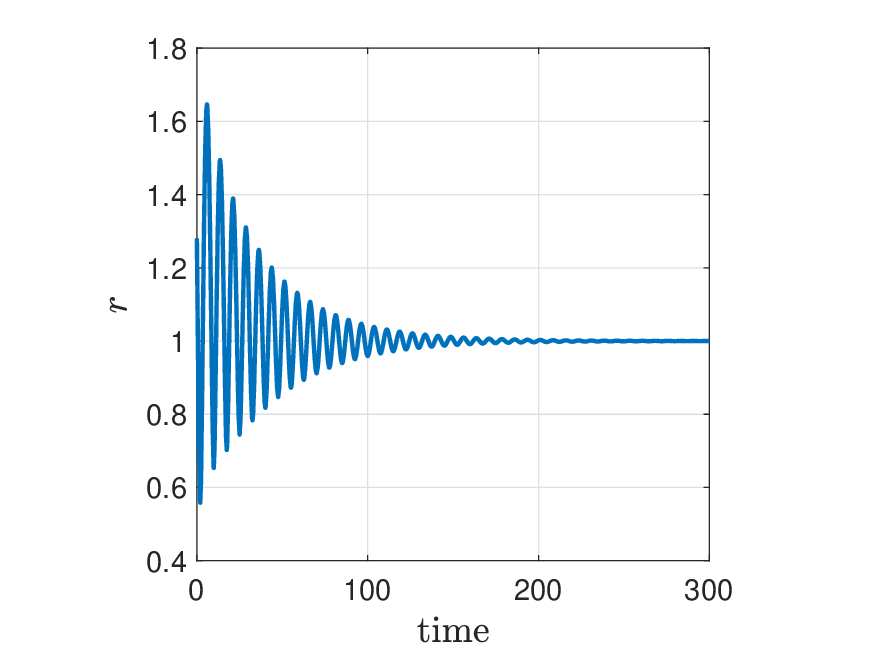}\label{range}}\hspace*{-0.53cm}
		\subfigure[Bearing]{\includegraphics[width=3.3cm]{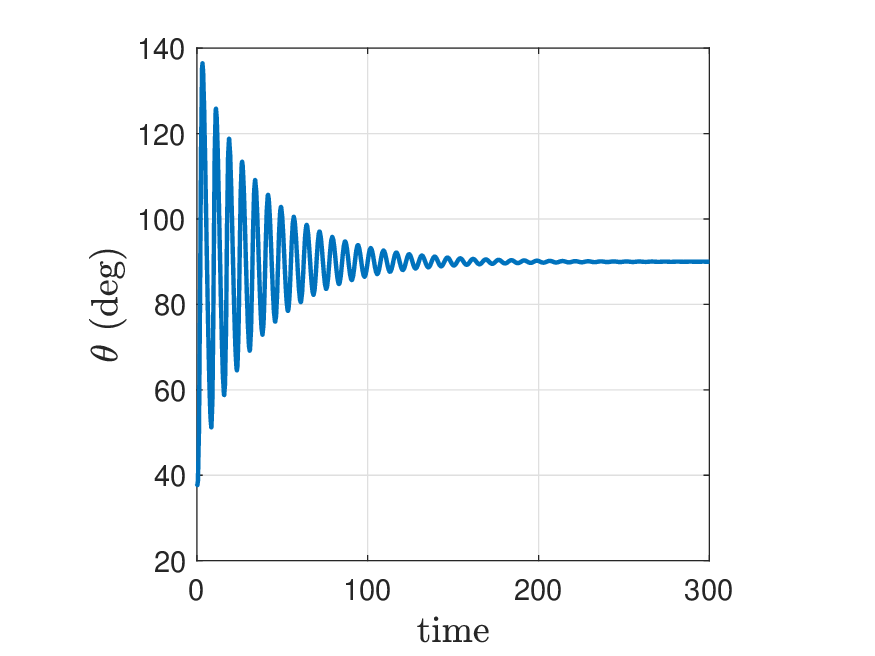}\label{bearing}}\hspace*{-0.53cm}
		\subfigure[Control]{\includegraphics[width=3.3cm]{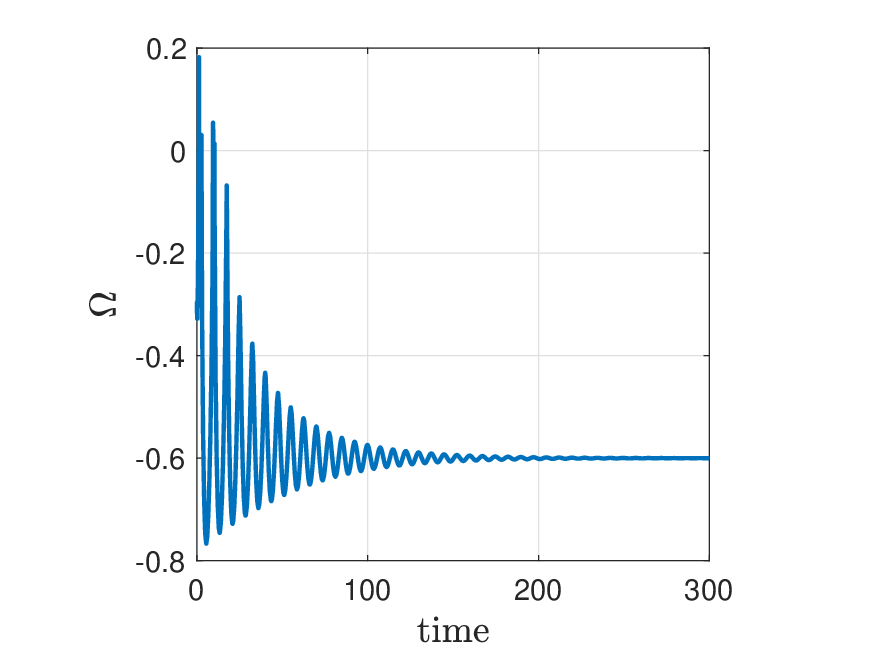}\label{control_law}}	
		\caption{Evolution of range $r(t)$, bearing angle $\theta(t)$ and control law $\Omega$ with time for multiple entry case in Fig.~\ref{multi_entry}.}
		\label{fig_paremetrs}}
		\vspace*{-15pt}
\end{figure}

\section{Simulation and Experimental Results}\label{section_6}
Consider the robot begins with states $[x(0), y(0), \theta(0)]^\top = [1~\text{m}, 0.8~\text{m}, 38^\circ]^\top$ and moves with $V = 0.6$ m/s. The target is situated at $[x_H, y_H]^\top = [0, 0]^\top$. The radii of the circles $\mathcal{C}_d$, $\mathcal{C}_a$ and $\mathcal{C}_s$ are chosen as $r_d = 1$ m, $r_a = 0.7$ m, and $r_s = 0.4$ m, respectively, satisfying Assumption~\ref{assumption_radii}. Please note that these initial conditions are chosen according to the available space for the experiments at our laboratory, and we use the same setting for the simulation and experiments for better illustration. For the given radii, we choose $\delta = 0.5 < \Delta = 0.5649$, in accordance with Theorem~\ref{thm_bound_delta}. Note that the gain $k = 1.4$, using \eqref{gain_k}. From \eqref{eta} and \eqref{W}, one can find that $\eta(r(0), \theta(0)) = 0.4454$ and $W(\eta(0)) = 0.7891$.  

\subsection{Simulation Results}
Choosing different values of $\kappa$, we obtain the trajectory plots as shown in Fig.~\ref{fig_trajectory}. Fig.~\ref{single_entry} is obtained for $\kappa = 0.05$ where the robot enters $\mathcal{C}_a$ at most once following the fact that $\kappa (= 0.05) > kV\delta^2{\rm e}^{-2W(0)} (= 0.0433)$ (see Theorem~\ref{thm_kappa_conditions}). While Fig.~\ref{multi_entry} is obtained for $\kappa = 0.015$ where the robot enters $\mathcal{C}_a$ multiple times, since $\kappa (= 0.015) < kV\delta^2{\rm e}^{-2W(0)} (= 0.0433)$. In summary, if the goal is for the robot to assess the hostile target from a closer proximity to gather crucial data, a smaller value of $\kappa$ may be chosen, allowing for multiple entries into $\mathcal{C}_a$ before settling down on the desired circle $\mathcal{C}_d$. However, if the objective is to achieve faster convergence to the desired circle $\mathcal{C}_d$, a larger value of $\kappa$ may be selected, provided that the robot's physical constraints are satisfied. Further, Fig.~\ref{fig_paremetrs} illustrates the variation of range $r(t)$, bearing angle $\theta(t)$ and control $\Omega$ in \eqref{control_new} with time only for the multiple entry case in Fig.~\ref{multi_entry}. Clearly, (i) the robot stabilizes on the desired circle $\mathcal{C}_d$ and $r(t) > r_s = 0.4, \forall t \geq 0$ (see Fig.~\ref{range}), (ii) $\theta(t)$ remains between $(0, \pi), \forall t \geq 0$ (as in Theorem~\ref{thm_range_theta}) and converges at $90^\circ$ (see Fig.~\ref{bearing}), (iii) $\Omega$ converges to the desired value $-(V/r_d) = -0.6$, where ``$-$" sign indicates that the robot moves in the clockwise direction.

\begin{figure}[t!] 	
	\centering{
		\hspace*{-0.1cm}
		\subfigure[Complete trajectory on track manager]{\includegraphics[width=0.14\textwidth]{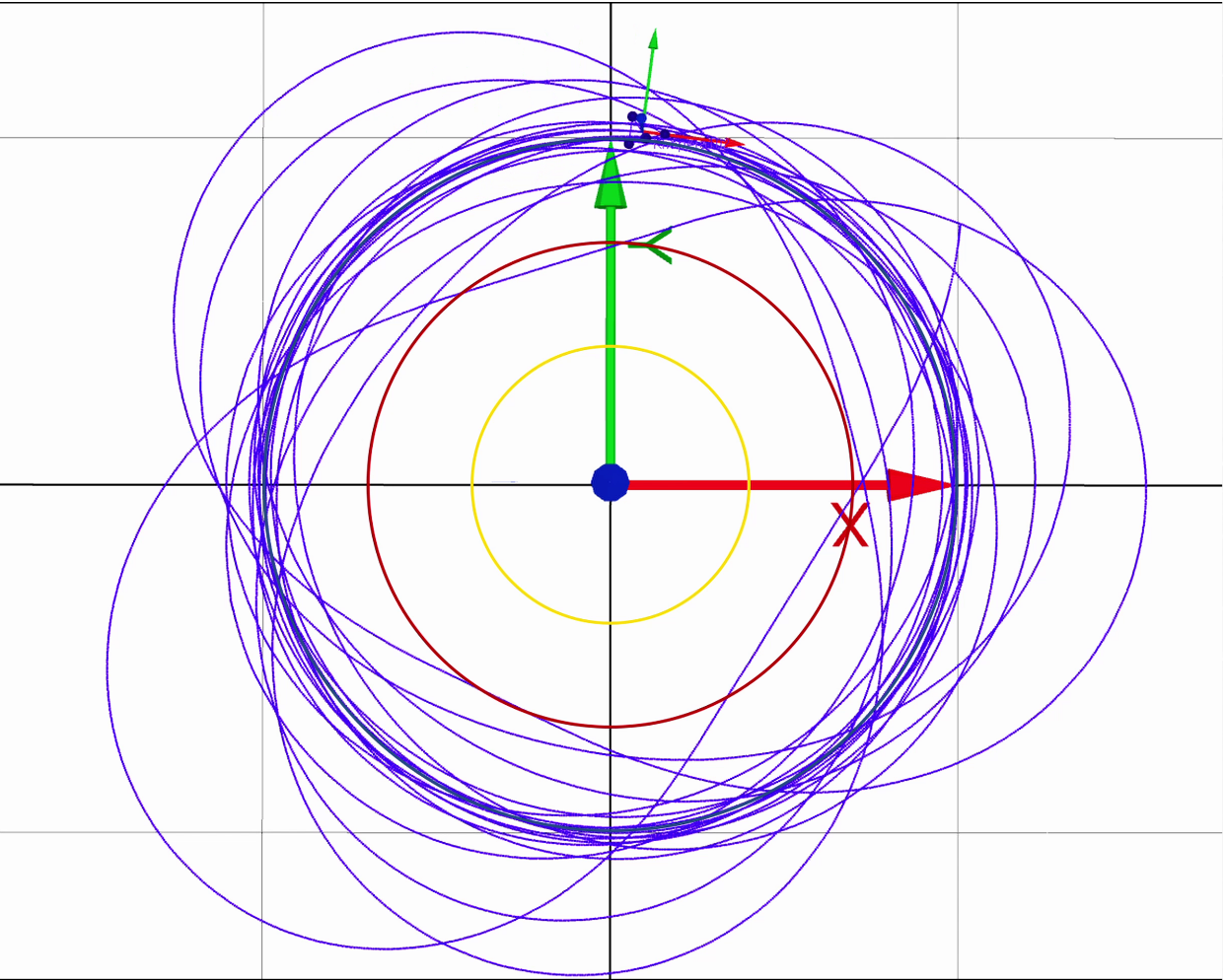}\label{qtm_multi_final}}
		\hspace*{0.15cm} 	 	
		\subfigure[Entering $C_a$ first time at $t=5 s$]{\includegraphics[width=0.14\textwidth]{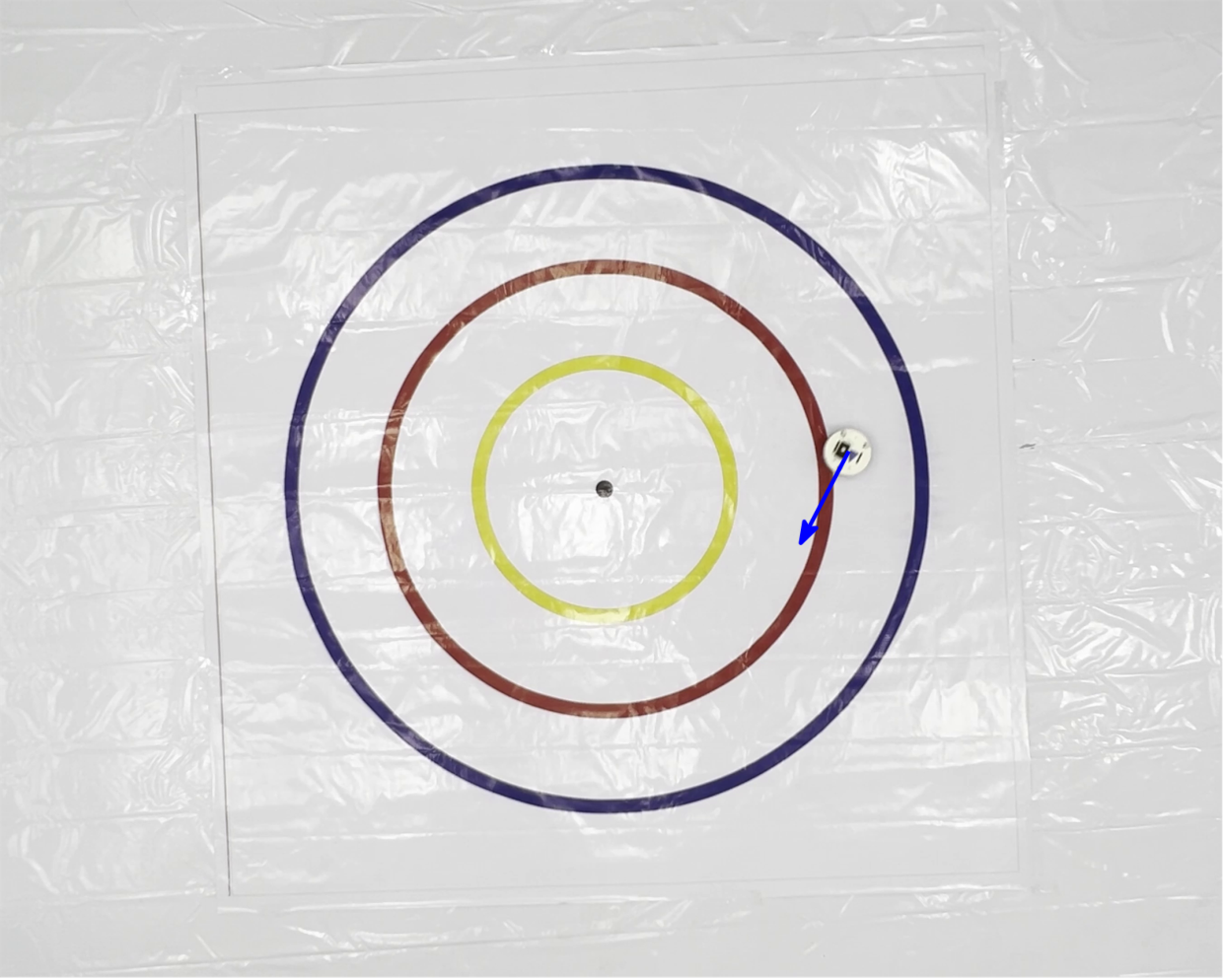}\label{multi_enter}} 	 
		 \hspace*{0.15cm}
		\subfigure[Circumnavigating $H$ on $C_d$ at $t=144 s$]{\includegraphics[width=0.14\textwidth]{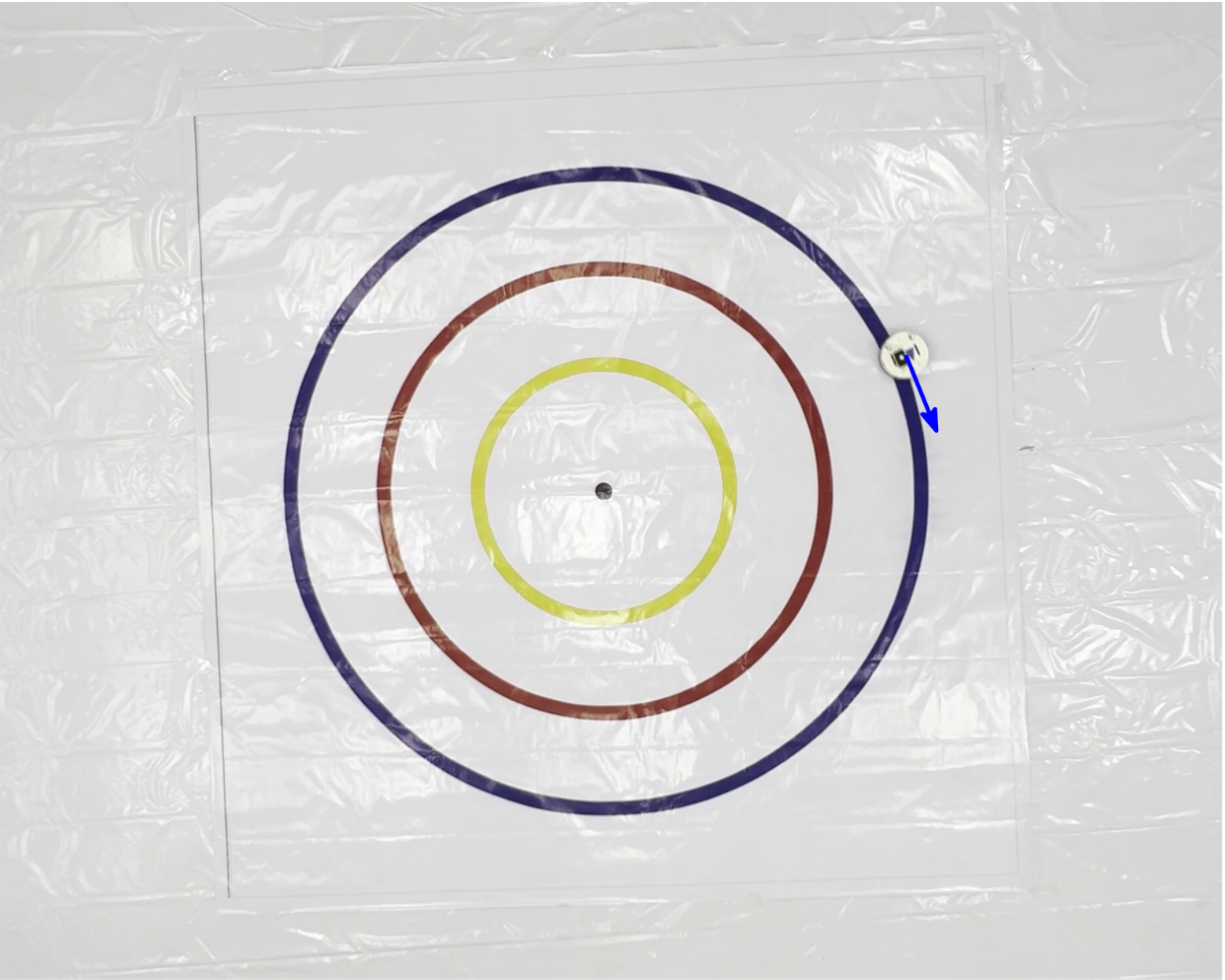}\label{multi_final}}
		\caption{Trajectory of Khepera IV robot in multiple entry case.}
		\label{fig_multi_entry_path}}
	\vspace*{-12pt}
\end{figure}

\begin{figure}[t!] 	
	\centering{
		\hspace*{-0.3cm}
		\subfigure[Range]{\includegraphics[width=3.4cm]{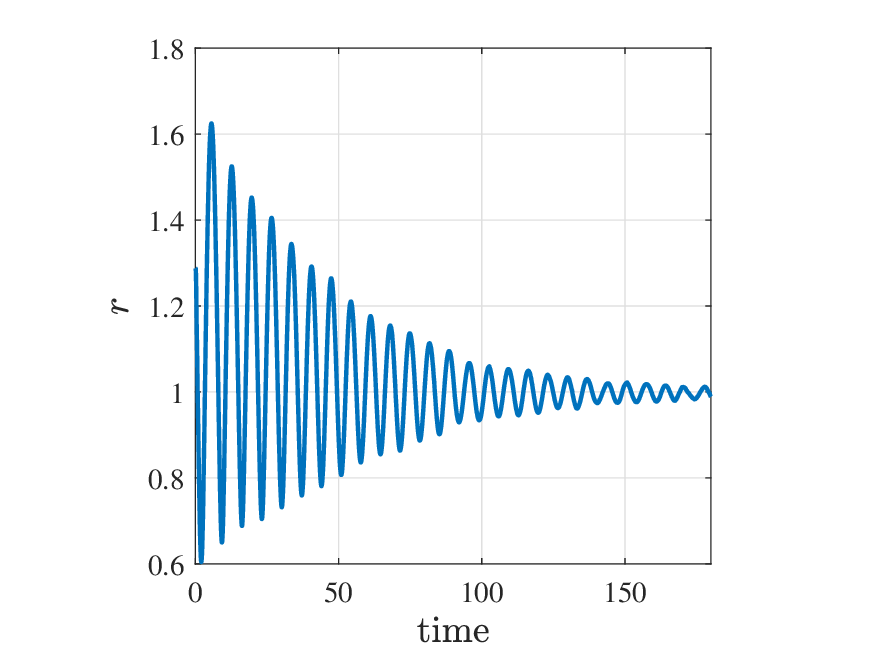}\label{multi_u}}\hspace*{-0.55cm}
		\subfigure[Bearing]{\includegraphics[width=3.4cm]{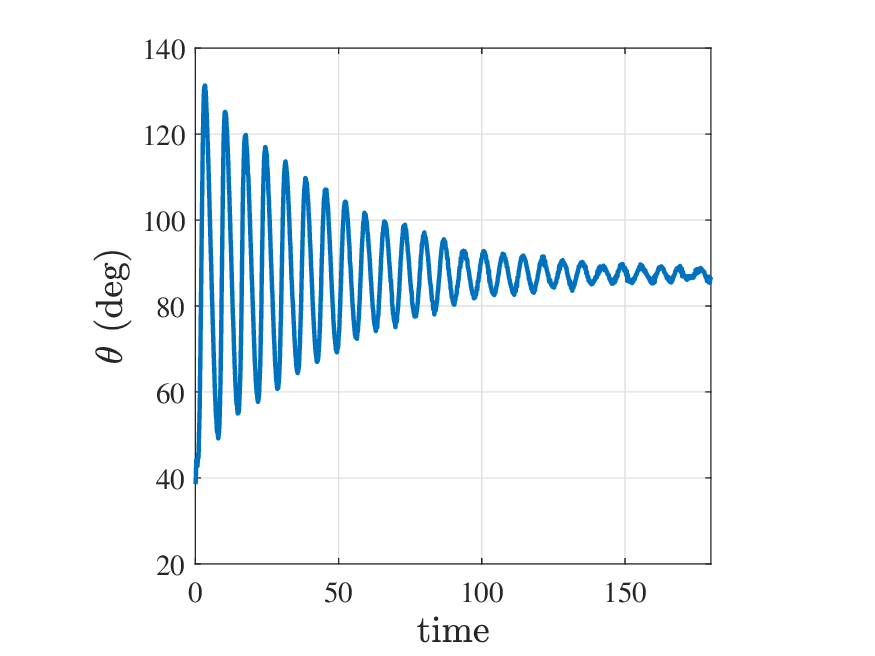}\label{multi_r}}\hspace*{-0.55cm}
		\subfigure[Control]{\includegraphics[width=3.4cm]{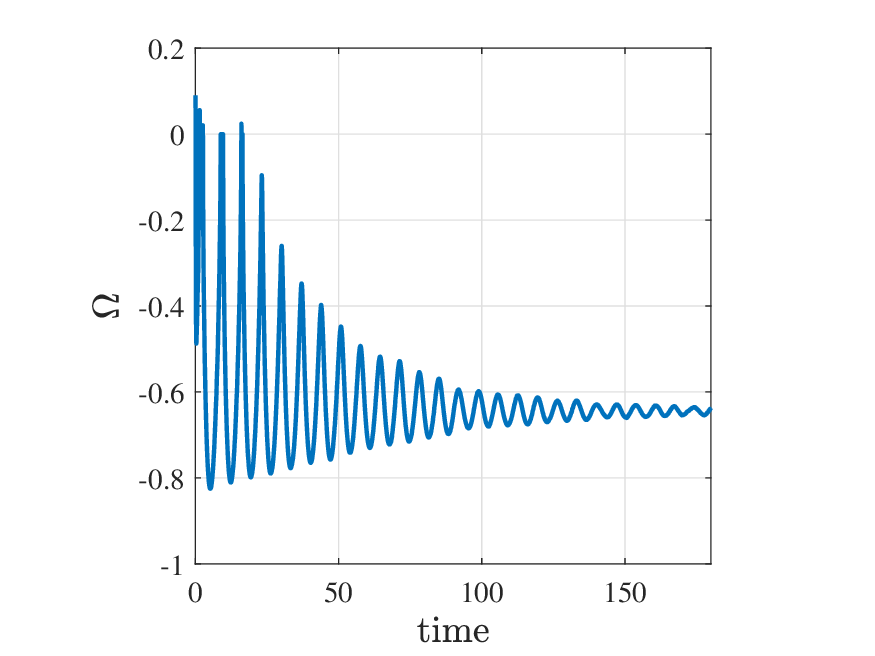}\label{multi_theta}}  
		\caption{Khepera IV robot with multiple entry case: Evolution of range, bearing angle and control.}
		\label{fig_multi_entry_data}}
	\vspace*{-15pt}
\end{figure}

\subsection{Experimental Results}
For experiments, we use the Khepera IV ground robot in the MoCap environment. The MoCap system comprises a track manager that can process the position and orientation data collected from the overhead cameras and provide feedback for implementing the proposed control algorithm. In our case, the Khepera IV robot acts as a rigid body in the track manager that streams the pose data to a Python script where we run our control algorithm. Further, the range-rate $\dot{r}$ is computed using Euler's approximation, which is run in the same loop where the controller $\Omega$ is evaluated. We randomly initialized $\dot{r}(0)$ within allowed limits (here, we considered $\dot{r}(0) = 0$) and calculated its subsequent values using the measurement of $r$, obtained from the feedback system, and the execution time of one control loop. Suppose that the value of $r$ measured in current iteration is $r(K)$ and the value stored from the previous iteration was $r(K-1)$, with the loop execution time measured being $T$, then we calculate $\dot{r}(K)$ as $\dot{r}(K) \approx [r(K) - r(K-1)]/{T}$. Since Khepera IV is (inherently) a differential drive robot, we resolve the linear and angular velocities of the unicycle model to individual wheel velocities of the Khepera IV as $V_r =  V + 0.5\Omega d_w, V_{\ell} = V - 0.5{\Omega}d_w$, where $V_r$ and $V_{\ell}$ are the speeds of the right and left wheels, respectively, $d_w = 10.54$ cm is the distance between the wheels, and $V$ and $\Omega$ are defined in \eqref{robot_model_cartesion}. The Khepera IV has a hardware limit for each wheel speed of $0.814$ m/s. Following this, we operated our robot at a linear velocity of $V = 0.6$ m/s and obtained $V_r$ and $V_{\ell}$, using \eqref{control_new}. We performed experiments for the same setting of the initial conditions and gain $\kappa$, as used for the simulation study. The robot trajectories for the case of multiple entry inside $\mathcal{C}_a$ are shown in Fig.~\ref{fig_multi_entry_path} with a complete trajectory profile captured from track manager as in Fig.~\ref{qtm_multi_final}. Further, the time evolution of different parameters is plotted in Fig.~\ref{fig_multi_entry_data}, indicating the desired behavior. The video of conducted experiments can be found on \url{https://youtu.be/mLBQcKGAsGQ}.   

\section{Conclusion and Further Remarks}\label{section_7}
We investigated the problem of safe circumnavigation around a hostile target using range and range-rate measurements. Benefiting from the straight line motion of the robot inside the auxiliary circle, we posed an equivalent problem of regulating the bearing angle at the entry point to this circle for achieving safe circumnavigation. The key technical developments in this direction were based on a crucial result (Theorem~\ref{thm_main_results}) where we leveraged the properties of BLF to design a parameter $\delta$. We determined an appropriate range for $\delta$ by applying Shafer's and logarithmic inequalities, showcasing its existence for the given radii of the three circles. It was demonstrated that the robot might enter the auxiliary circle multiple times (albeit finitely) before stabilizing to the desired circle, depending upon a gain term in the proposed controller. 

Our approach has a few limitations. First, the selection of the parameter $\delta$ within the range $\delta \in (0, 1)$ limits the bearing to the interval $(0, \pi)$. Secondly, our method depends on both range and range-rate information, but obtaining accurate range-rate information is not always straightforward or feasible \cite{cao2015uav,wang2024target}. These limitations present interesting avenues for future research where we aim to make contributions. Further, it will be interesting to investigate the problem using contemporary control barrier function (CBF)-based approaches, while relying solely on range-based measurements.


\bibliographystyle{IEEEtran}
\bibliography{References_Updated}

\begin{thebibliography}{10}
\providecommand{\url}[1]{#1}
\csname url@samestyle\endcsname
\providecommand{\newblock}{\relax}
\providecommand{\bibinfo}[2]{#2}
\providecommand{\BIBentrySTDinterwordspacing}{\spaceskip=0pt\relax}
\providecommand{\BIBentryALTinterwordstretchfactor}{4}
\providecommand{\BIBentryALTinterwordspacing}{\spaceskip=\fontdimen2\font plus
\BIBentryALTinterwordstretchfactor\fontdimen3\font minus
  \fontdimen4\font\relax}
\providecommand{\BIBforeignlanguage}[2]{{%
\expandafter\ifx\csname l@#1\endcsname\relax
\typeout{** WARNING: IEEEtran.bst: No hyphenation pattern has been}%
\typeout{** loaded for the language `#1'. Using the pattern for}%
\typeout{** the default language instead.}%
\else
\language=\csname l@#1\endcsname
\fi
#2}}
\providecommand{\BIBdecl}{\relax}
\BIBdecl

\bibitem{qin2019autonomous}
H.~Qin, Z.~Meng, W.~Meng, X.~Chen, H.~Sun, F.~Lin, and M.~H. Ang, ``Autonomous
  exploration and mapping system using heterogeneous uavs and ugvs in
  gps-denied environments,'' \emph{IEEE Transactions on Vehicular Technology},
  vol.~68, no.~2, pp. 1339--1350, 2019.

\bibitem{petravcek2021large}
P.~Petr{\'a}{\v{c}}ek, V.~Kr{\'a}tk{\`y}, M.~Petrl{\'\i}k, T.~B{\'a}{\v{c}}a,
  R.~Kratochv{\'\i}l, and M.~Saska, ``Large-scale exploration of cave
  environments by unmanned aerial vehicles,'' \emph{IEEE Robotics and
  Automation Letters}, vol.~6, no.~4, pp. 7596--7603, 2021.

\bibitem{deghat2014localization}
M.~Deghat, I.~Shames, B.~D. Anderson, and C.~Yu, ``Localization and
  circumnavigation of a slowly moving target using bearing measurements,''
  \emph{IEEE Transactions on Automatic Control}, vol.~59, no.~8, pp.
  2182--2188, 2014.

\bibitem{cao2021safe}
S.~Cao, R.~Li, Y.~Shi, and Y.~Song, ``Safe convex-circumnavigation of one agent
  around multiple targets using bearing-only measurements,'' \emph{Automatica},
  vol. 134, p. 109934, 2021.

\bibitem{zengin2007real}
U.~Zengin and A.~Dogan, ``Real-time target tracking for autonomous uavs in
  adversarial environments: A gradient search algorithm,'' \emph{IEEE
  Transactions on Robotics}, vol.~23, no.~2, pp. 294--307, 2007.

\bibitem{brinon2015distributed}
L.~Brin{\'o}n-Arranz, L.~Schenato, and A.~Seuret, ``Distributed source seeking
  via a circular formation of agents under communication constraints,''
  \emph{IEEE Transactions on Control of Network Systems}, vol.~3, no.~2, pp.
  104--115, 2015.

\bibitem{demetriou2020navigating}
M.~A. Demetriou and E.~Bakolas, ``Navigating over 3d environments while
  minimizing cumulative exposure to hazardous fields,'' \emph{Automatica}, vol.
  115, p. 108859, 2020.

\bibitem{cao2015uav}
Y.~Cao, ``Uav circumnavigating an unknown target under a gps-denied environment
  with range-only measurements,'' \emph{Automatica}, vol.~55, pp. 150--158,
  2015.

\bibitem{matveev2017tight}
A.~S. Matveev, A.~A. Semakova, and A.~V. Savkin, ``Tight circumnavigation of
  multiple moving targets based on a new method of tracking environmental
  boundaries,'' \emph{Automatica}, vol.~79, pp. 52--60, 2017.

\bibitem{wang2024target}
J.~Wang, B.~Ma, and K.~Yan, ``Target circumnavigation of mobile robots using
  range-only measurements,'' \emph{Automatica}, vol. 159, p. 111386, 2024.

\bibitem{oh2015coordinated}
H.~Oh, S.~Kim, H.-s. Shin, and A.~Tsourdos, ``Coordinated standoff tracking of
  moving target groups using multiple uavs,'' \emph{IEEE Transactions on
  Aerospace and Electronic Systems}, vol.~51, no.~2, pp. 1501--1514, 2015.

\bibitem{zheng2015distributed}
R.~Zheng, Z.~Lin, M.~Fu, and D.~Sun, ``Distributed control for uniform
  circumnavigation of ring-coupled unicycles,'' \emph{Automatica}, vol.~53, pp.
  23--29, 2015.

\bibitem{shi2021distributed}
L.~Shi, R.~Zheng, M.~Liu, and S.~Zhang, ``Distributed circumnavigation control
  of autonomous underwater vehicles based on local information,'' \emph{Systems
  \& Control Letters}, vol. 148, p. 104873, 2021.

\bibitem{shames2011circumnavigation}
I.~Shames, S.~Dasgupta, B.~Fidan, and B.~D. Anderson, ``Circumnavigation using
  distance measurements under slow drift,'' \emph{IEEE Transactions on
  Automatic Control}, vol.~57, no.~4, pp. 889--903, 2011.

\bibitem{zou2020distributed}
Y.~Zou, L.~Wang, and Z.~Meng, ``Distributed localization and circumnavigation
  algorithms for a multiagent system with persistent and intermittent bearing
  measurements,'' \emph{IEEE Transactions on Control Systems Technology},
  vol.~29, no.~5, pp. 2092--2101, 2020.

\bibitem{cao2013circumnavigation}
Y.~Cao, J.~Muse, D.~Casbeer, and D.~Kingston, ``Circumnavigation of an unknown
  target using uavs with range and range rate measurements,'' in \emph{52nd
  IEEE conference on decision and control}.\hskip 1em plus 0.5em minus
  0.4em\relax IEEE, 2013, pp. 3617--3622.

\bibitem{wang2021mobile}
J.~Wang, B.~Ma, and K.~Yan, ``Mobile robot circumnavigating an unknown target
  using only range rate measurement,'' \emph{IEEE Transactions on Circuits and
  Systems II: Express Briefs}, vol.~69, no.~2, pp. 509--513, 2021.

\bibitem{zheng2015enclosing}
R.~Zheng, Y.~Liu, and D.~Sun, ``Enclosing a target by nonholonomic mobile
  robots with bearing-only measurements,'' \emph{Automatica}, vol.~53, pp.
  400--407, 2015.

\bibitem{hrabar2012evaluation}
S.~Hrabar, ``An evaluation of stereo and laser-based range sensing for
  rotorcraft unmanned aerial vehicle obstacle avoidance,'' \emph{Journal of
  Field Robotics}, vol.~29, no.~2, pp. 215--239, 2012.

\bibitem{brinon2019circular}
L.~Bri{\~n}{\'o}n-Arranz, A.~Seuret, and A.~Pascoal, ``Circular formation
  control for cooperative target tracking with limited information,''
  \emph{Journal of the Franklin Institute}, vol. 356, no.~4, pp. 1771--1788,
  2019.

\bibitem{jain2019trajectory}
A.~Jain and D.~Ghose, ``Trajectory-constrained collective circular motion with
  different phase arrangements,'' \emph{IEEE Transactions on Automatic
  Control}, vol.~65, no.~5, pp. 2237--2244, 2019.

\bibitem{li2019safe}
R.~Li, Y.~Shi, X.~Wu, and Y.~Song, ``Safe-circumnavigation of one single agent
  around a group of targets with only bearing information,'' \emph{Journal of
  the Franklin Institute}, vol. 356, no.~18, pp. 11\,546--11\,560, 2019.

\bibitem{tee2009barrier}
K.~P. Tee, S.~S. Ge, and E.~H. Tay, ``Barrier lyapunov functions for the
  control of output-constrained nonlinear systems,'' \emph{Automatica},
  vol.~45, no.~4, pp. 918--927, 2009.

\bibitem{jin2018adaptive}
X.~Jin, ``Adaptive fixed-time control for mimo nonlinear systems with
  asymmetric output constraints using universal barrier functions,'' \emph{IEEE
  Transactions on Automatic Control}, vol.~64, no.~7, pp. 3046--3053, 2018.

\bibitem{panagou2015distributed}
D.~Panagou, D.~M. Stipanovi{\'c}, and P.~G. Voulgaris, ``Distributed
  coordination control for multi-robot networks using lyapunov-like barrier
  functions,'' \emph{IEEE Transactions on Automatic Control}, vol.~61, no.~3,
  pp. 617--632, 2015.

\bibitem{han2019robust}
D.~Han and D.~Panagou, ``Robust multitask formation control via parametric
  lyapunov-like barrier functions,'' \emph{IEEE Transactions on Automatic
  Control}, vol.~64, no.~11, pp. 4439--4453, 2019.

\bibitem{ballaben2024lyapunov}
R.~Ballaben, P.~Braun, and L.~Zaccarian, ``Lyapunov-based avoidance controllers
  with stabilizing feedback,'' \emph{IEEE Control Systems Letters}, 2024.

\bibitem{shafer1966elementary}
R.~Shafer, ``Elementary problems: E 1867,'' \emph{The American Mathematical
  Monthly}, vol.~73, no.~3, p. 309, 1966.

\bibitem{love198064}
E.~R. Love, ``64.4 some logarithm inequalities,'' \emph{The Mathematical
  Gazette}, vol.~64, no. 427, pp. 55--57, 1980.

\bibitem{khalil2002control}
H.~K. Khalil, \emph{Nonlinear Systems}, 3rd~ed.\hskip 1em plus 0.5em minus
  0.4em\relax Prentice Hall, New York, NY, 2002.

\bibitem{sastry2011adaptive}
S.~Sastry and M.~Bodson, \emph{Adaptive control: stability, convergence and
  robustness}.\hskip 1em plus 0.5em minus 0.4em\relax Courier Corporation,
  2011.

\end{thebibliography}


\end{document}